\title[Empirical Risk Minimization of Strict Saddle Problems]{Fast Rates for Empirical Risk Minimization of Strict Saddle Problems}
\newcommand{\epsstab}{\epsilon_{\textrm{stab}}}
\newcommand{\epsgen}{\epsilon_{\textrm{gen}}}
\newcommand{\tens}{{\otimes 4}}
\newcommand{\tO}{\tilde{O}}
\newcommand{\cX}{\mathcal{X}}
\newcommand{\cD}{\mathcal{D}}
\newcommand{\cW}{\mathcal{W}}
\newcommand{\cZ}{\mathcal{Z}}
\newcommand{\cT}{\mathcal{T}}
\newcommand{\bE}{\mathbb{E}}
\newcommand{\reals}{\mathbb{R}}
\newcommand{\bN}{\mathbb{N}}
\newcommand{\poly}{\mathrm{poly}}
\newcommand{\tg}{\tilde{g}}
\newcommand{\tH}{\tilde{H}}
\newcommand{\hw}{\hat{w}}
\newcommand{\hg}{\hat{g}}
\newcommand{\hH}{\hat{H}}
\newcommand{\hF}{\hat{F}}
\newcommand{\hL}{\hat{L}}
\newcommand{\inner}[1]{\langle #1 \rangle}
\renewcommand{\eqref}[1]{Equation~(\ref{#1})}
\newcommand{\secref}[1]{Section~\ref{#1}}
\newcommand{\appref}[1]{Appendix~\ref{#1}}
\newcommand{\thmref}[1]{Theorem~\ref{#1}}
\newcommand{\lemref}[1]{Lemma~\ref{#1}}
\newcommand{\defref}[1]{Definition~\ref{#1}}
\newcommand{\remref}[1]{Remark~\ref{#1}}
\DeclareMathOperator*{\argmin}{arg\,min}
\begin{document}

\maketitle

\begin{abstract}
We derive bounds on the sample complexity of empirical risk
minimization (ERM) in the context of minimizing non-convex risks that admit
the strict saddle property. Recent progress in non-convex optimization
has yielded efficient algorithms for minimizing such functions. Our
results imply that these efficient algorithms are statistically stable
and also generalize well. In particular, we derive fast rates which resemble the bounds
that are often attained in the strongly convex setting. We specify our
bounds to Principal Component Analysis and
Independent Component Analysis. Our results and techniques may pave
the way for statistical analyses of additional strict saddle problems.
\end{abstract}

\section{Introduction}
Stability analysis is a central tool in statistical learning theory
(\cite{bousquet2002stability}). Roughly speaking, a learning algorithm
is stable if a slight change in the input of the algorithm does not
change its output much. It was shown
(\cite{shalev2010learnability,mukherjee2006learning}) that stability
characterizes learnability, and in particular, stability is equivalent
to the estimation error of empirical risk minimization.

Stability analysis has been mostly carried out in the context of convex
risk minimization. More concretely, some form of strong convexity is often assumed
(e.g., exp-concavity in \cite{koren2015fast,
  gonen2016average}). The crux of the technique is to show that minima of two
similar strongly convex (and Lipschitz/smooth) functions must be close
(\cite [Section 13.3]{shalev2014understanding}). 

In this paper we address the non-convex setting while restricting our
attention to recently studied ``nice'' non-convex problems. Namely, we
will consider non-convex functions which satisfy the strict saddle
property  (a.k.a. ridable or $\cX$-functions, see
\cite{sun2015nonconvex}). Roughly speaking, a strict saddle function has no spurious local
minimum and its saddle points are strict, in the sense that second-order
information suffices for identifying a descent direction. We also
assume that the restriction of the function to a certain neighborhood
of each of its minima is strongly convex.

Many important non-convex problems such as Principal
Component Analysis (PCA), complete dictionary recovery (\cite{sun2015nonconvex}), tensor decomposition,
ICA (\cite{ge2015escaping}, \cite{anandkumar2016homotopy}) and matrix completion
(\cite{ge2016matrix, bhojanapalli2016global}) are strict-saddle. Furthermore, there exist efficient
empirical risk minimizers (ERM) for these problems (e.g., SGD and
Cubic Regularization, see \secref{sec:related}). 

\section{Our contribution}
We consider the problem of minimizing a risk of the form 
\begin{equation} \label{eq:risk}
F(w) = \bE_{z \sim \cD} [f(w,z)]
\end{equation}
where for every $z \in \cZ$, $f(\cdot,z)$ is a twice continuously differentiable \emph{loss} function
defined over the closed set $\cW \subseteq \reals^d$. Given an
i.i.d. sample $S=(z_1,\ldots, z_d) \sim \cD^n$, the output of an ERM
algorithm is\footnote{We always assume the existence of a minima.}
\begin{equation} \label{eq:erm}
\hw \in \argmin_{w \in \cW}\,\left \{\hF(w) = \frac{1}{n} \sum_{i=1}^n
  f_{z_i}(w) \right\}~,
\end{equation}
The sample complexity of ERM is the minimal size of a sample $S$ for which $\bE[F(\hw)] -
  \min_{w \in \cW}F(w^\star) \le \epsilon$.\footnote{Alternatively,
    given $\epsilon$ and $\delta \in (0,1)$, we ask for the minimal
    size of a sample $S$ for which $F(\hw) -
  \min_{w \in \cW}F(w^\star) \le \epsilon$ with probability at least $1-\delta$.} We make the following assumptions on the loss functions:\\

\noindent \textbf{(A1)} For each $z$, $f(\cdot,z)$ is $\rho$-Lipschitz.\\
 \textbf{(A2)} For each $z$, $f(\cdot,z)$ is twice continuously differentiable and
\[
(\forall w \in \cW)~(\forall i \in [d])~~~|\lambda_i(\nabla^2 f(w,z))| \le \beta_1~. 
\]
\textbf{(A3)} For each $z$, the Hessian of $f(\cdot,z)$ is $\beta_2$-Lipschitz.\\

While for each example of strict saddle objective listed above one may
construct a dedicated sample complexity analysis, the goal of this
paper is to provide a systematic unified approach, which emphasizes the
geometric structure of the objective.

We distinguish between two cases. First, we consider the case where
the empirical risk is strict saddle (with high probability) and prove
stability and sample complexity bounds that depend solely on the strict saddle parameters of
the empirical risk and the Lipschitz constants. In particular, the
bound is dimensionality independent.
\begin{theorem} \label{thm:mainIntro}
Let $\epsilon \in (0,1)$. Suppose that that the empirical risk is
$(\alpha, \gamma, \tau)$-strict saddle with high probability (see \secref{sec:strictSaddle}). Then the
sample complexity of every ERM hypothesis is at most $\max \left \{ \frac{\beta_1}{\gamma}, \frac{\rho}{\tau},
  \frac{2 \rho^2}{\alpha \epsilon} \right \}$.
\end{theorem}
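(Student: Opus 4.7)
The plan is to argue via uniform stability, restricting attention to the strongly convex basins that the strict-saddle assumption attaches to each minimum of $\hat{F}$. Let $\hat{w} = \hat{w}_S$ and $\hat{w}' = \hat{w}_{S'}$ be the ERM outputs on two samples $S, S'$ differing in a single example $z_i \leftrightarrow z_i'$. The target is the bound
\[
\|\hat{w} - \hat{w}'\| \;\le\; \frac{c\rho}{\alpha n},
\]
so that (A1) yields the uniform stability estimate $|f(\hat{w},z) - f(\hat{w}',z)| \le c\rho^2/(\alpha n)$ for every $z$, and the standard stability-to-generalization lemma gives $\bE[F(\hat{w}) - F(w^\star)] \le c\rho^2/(\alpha n) \le \epsilon$ once $n \ge 2\rho^2/(\alpha \epsilon)$.

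The central step is to show that $\hat{w}$ and $\hat{w}'$ both lie in a common strongly convex basin of $\hat{F}_S$. Set $g := \hat{F}_{S'} - \hat{F}_S$; by (A1)--(A2), $\sup_w\|\nabla g(w)\| \le 2\rho/n$ and the spectral norm of $\nabla^2 g(w)$ is at most $2\beta_1/n$. Since $\nabla \hat{F}_{S'}(\hat{w}') = 0$ and $\nabla^2 \hat{F}_{S'}(\hat{w}') \succeq 0$, these perturbation bounds force $\|\nabla \hat{F}_S(\hat{w}')\| \le 2\rho/n$ and $\lambda_{\min}(\nabla^2 \hat{F}_S(\hat{w}')) \ge -2\beta_1/n$. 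When $n \ge \beta_1/\gamma$ (up to absolute constants), these quantities drop below the gradient and negative-curvature thresholds of the $(\alpha,\gamma,\tau)$-strict-saddle property of $\hat{F}_S$, so $\hat{w}'$ must fall into the remaining clause: the $\tau$-neighborhood of some local minimum $w^\star$ of $\hat{F}_S$ around which $\hat{F}_S$ is $\alpha$-strongly convex; the condition $n \ge \rho/\tau$ enters here, verifying that $\hat{w}'$ actually satisfies the definition's distance requirement. Because $\hat{F}_S$ has no spurious local minima, $w^\star$ is global, and we take the ERM $\hat{w} := w^\star$, placing both points in a single strongly convex neighborhood.

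With this containment in hand, the estimate is the classical one: combining optimality of $\hat{w}'$ under $\hat{F}_{S'}$ with $\alpha$-strong convexity of $\hat{F}_S$ at $\hat{w}$ gives
\[
\tfrac{\alpha}{2}\|\hat{w}-\hat{w}'\|^2 \;\le\; g(\hat{w}) - g(\hat{w}') \;\le\; \tfrac{2\rho}{n}\|\hat{w}-\hat{w}'\|,
\]
so $\|\hat{w}-\hat{w}'\| = O(\rho/(\alpha n))$, and the claimed stability and sample complexity follow.

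The main obstacle, and what distinguishes the argument from the purely convex one, is the non-uniqueness of the ERM: neighboring samples could, a priori, land in different global basins of $\hat{F}_S$ unless the perturbation is too small to cross between them. The conditions $n \ge \beta_1/\gamma$ and $n \ge \rho/\tau$ are calibrated precisely to rule out the large-gradient, negative-curvature, and far-from-minimum regimes of the strict-saddle definition at $\hat{w}'$, thereby forcing basin alignment and reducing the problem back to the textbook strongly convex stability argument. Transferring the high-probability event ``$\hat{F}_S$ is strict saddle'' to the perturbed sample $S'$ is routine via a union bound on the two fresh draws from $\cD^n$.
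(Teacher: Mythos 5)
Your proof follows essentially the same route as the paper: exclude the large-gradient and negative-curvature clauses of \defref{def:ge} at the perturbed minimizer via the thresholds $n \gtrsim \rho/\tau$ and $n \gtrsim \beta_1/\gamma$ (\lemref{lem:smallGrad} and \lemref{lem:nonSaddle}), invoke ``all local minima are global'' to pick the ERM tie-break so that both points lie in one strongly convex basin, and finish with the textbook strongly-convex stability algebra. The one substantive difference is bookkeeping: the paper works with the leave-one-out perturbation (on-average stability per \defref{def:uniStability}), where the removed term contributes a single $\rho/n$ (resp.\ $\beta_1/n$), whereas your replace-one perturbation $g = \hF_{S'} - \hF_S$ carries a factor of $2$ in both the gradient and Hessian bounds; consequently your thresholds become $2\rho/\tau$, $2\beta_1/\gamma$ and your final stability constant is $4\rho^2/(\alpha n)$ rather than the paper's $2\rho^2/(\alpha n)$, so as written you do not quite recover the theorem's stated constants. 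The leave-one-out formulation is precisely what lets the paper match them; everything else in your argument is sound, and the remark about needing to transfer the strict-saddle event to $S'$ is unnecessary --- you only use second-order optimality of $\hw'$ under $\hF_{S'}$, which holds unconditionally.
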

In some applications it may be easier to prove that $F$ itself is
strict saddle. Under the additional assumption that $\cW$
is bounded, we are
able to prove the next theorem.
\begin{theorem} \label{thm:mainTrue}
Suppose that $F$ (\eqref{eq:risk}) is $(\alpha,\gamma,\tau)$-strict saddle. The sample
complexity is at most $\tilde{O}\left ( d \left( \frac{ \rho}{\tau^2}+ \frac{
    \beta_1}{\gamma^2} + \frac{ \beta_1}{\alpha \epsilon} \right) \right)$.\footnote{The $\tilde{O}$ notation hides polylogarithmic dependencies.}
\end{theorem}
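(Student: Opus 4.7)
The plan is to reduce to \thmref{thm:mainIntro} by proving that, with high probability over the sample, the empirical risk $\hF$ inherits a slightly weakened strict saddle property from $F$. Since $\cW$ is bounded, this transfer can be accomplished via uniform concentration of gradients and Hessians over $\cW$, which is precisely where the dimension dependence $d$ enters.

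First I would establish, for parameters $\epsilon_g, \epsilon_H > 0$ to be chosen later,
\[
\sup_{w \in \cW} \|\nabla \hF(w) - \nabla F(w)\| \le \epsilon_g \quad \text{and} \quad \sup_{w \in \cW} \|\nabla^2 \hF(w) - \nabla^2 F(w)\|_{\mathrm{op}} \le \epsilon_H
\]
with high probability. The argument is a standard covering-plus-Bernstein routine: cover $\cW$ with an $\epsilon'$-net of size $(R/\epsilon')^d$ (possible since $\cW$ is bounded), apply vector Bernstein for gradients (using $\|\nabla f(\cdot,z)\| \le \rho$ from (A1)) and matrix Bernstein for Hessians (using $\|\nabla^2 f(\cdot,z)\|_{\mathrm{op}} \le \beta_1$ from (A2)) at each net point, and extend to all of $\cW$ via Lipschitzness of the gradient ($\beta_1$ by (A2)) and of the Hessian ($\beta_2$ by (A3)). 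The resulting sample requirements take the form $n \gtrsim \tilde{O}(d\rho^2/\epsilon_g^2)$ and $n \gtrsim \tilde{O}(d\beta_1/\epsilon_H^2)$, the latter exploiting that the matrix-Bernstein variance proxy is bounded by $\beta_1$.

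Next I would transfer the strict saddle property pointwise. For each $w \in \cW$, inspect which branch of the $(\alpha,\gamma,\tau)$-strict saddle decomposition of $F$ applies: (i) on the large-gradient region, $\epsilon_g \le \gamma/2$ gives $\|\nabla \hF(w)\| \ge \gamma/2$; (ii) on the negative-curvature region, $\epsilon_H \le \gamma/2$ gives $\lambda_{\min}(\nabla^2 \hF(w)) \le -\gamma/2$; (iii) on the $\tau$-neighborhood of a local minimum $w^\star$ of $F$, $\epsilon_H \le \alpha/2$ preserves $\alpha/2$-strong convexity of $\hF$, while the gradient bound combined with strong convexity of $F$ produces a unique critical point $\hw^\star$ of $\hF$ at distance at most $\epsilon_g/\alpha$ from $w^\star$; taking $\epsilon_g \lesssim \alpha\tau$ then keeps a $\Theta(\tau)$-neighborhood of $\hw^\star$ inside the original region. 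The conclusion is that $\hF$ is $(\alpha/2, \gamma/2, \Theta(\tau))$-strict saddle with high probability.

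Finally, apply \thmref{thm:mainIntro} to $\hF$ with the weakened parameters, which supplies an additional $\max\{\beta_1/\gamma,\rho/\tau,\rho^2/(\alpha\epsilon)\}$ sample requirement. Plugging the choices $\epsilon_g \asymp \min\{\gamma,\alpha\tau\}$ and $\epsilon_H \asymp \min\{\gamma,\alpha\}$ into the concentration requirements of the first step and taking the maximum with the inner requirement yields, after absorbing constants and logarithmic factors into $\tilde O$, the stated bound $\tilde O\bigl(d\bigl(\rho/\tau^2 + \beta_1/\gamma^2 + \beta_1/(\alpha\epsilon)\bigr)\bigr)$. The main obstacle is case (iii) of the transfer step: one must show, via a quantitative implicit-function-type argument, that a small $C^1$-perturbation of a strongly convex function shifts its unique minimum by at most $O(\epsilon_g/\alpha)$ and introduces no spurious critical points in the $\tau$-neighborhood. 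Strong convexity of $F$ near its minima is what makes this quantitative control possible.
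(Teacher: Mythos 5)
Your plan — establish uniform sup-norm concentration of $\nabla\hF$ and $\nabla^2\hF$, transfer the strict saddle property from $F$ to $\hF$, then invoke \thmref{thm:mainIntro} — is a genuinely different route from the paper's. The paper never argues that $\hF$ inherits strict saddleness. Instead it uses covering plus matrix Bernstein to show that, with high probability, any $w$ at which $\|\nabla L(w)\|\ge\tau$, or at which $v^\top\nabla^2 L(w) v\le-\gamma$ for some tangent $v$, or which lies in a strongly convex region with $F(w)-F(w^\star)>\epsilon$, fails the first- or second-order necessary conditions for $\hL$ and therefore cannot be a local minimum of $\hF$. Since the ERM $\hw$ is a local minimum of $\hF$, this directly localizes $\hw$ to an $\epsilon$-good region of $F$, with no stability argument, no appeal to \thmref{thm:mainIntro}, and no need to verify the no-spurious-local-minima clause of \defref{def:ge} for the empirical risk.

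Your transfer step has gaps that keep it from delivering the stated bound. First, you do not address the ``no spurious local minimum'' requirement of \defref{def:ge}: the unique critical points of $\hF$ in the various strongly convex regions around the global minima of $F$ will generically have distinct $\hF$-values, so $\hF$ is not literally strict saddle; you would need to invoke \remref{rem:approx} and prove these values agree to within $O(\epsilon)$, an extra argument you have not supplied. Second, in case (iii) the constraint that keeps the perturbed minimizer inside the region where strong convexity is guaranteed is $\epsilon_g \lesssim \alpha\nu$, not $\epsilon_g \lesssim \alpha\tau$: the parameter $\nu$ in \defref{def:ge} is unrelated to $\tau$, and the paper explicitly notes that $\nu$ plays no role in its analysis, whereas your route would introduce a $1/\nu$ (or $1/(\alpha\tau)$) dependence. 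Third, even setting these aside, the bound you assemble, $\tilde O\bigl(d\rho^2/\epsilon_g^2 + d\beta_1/\epsilon_H^2 + \rho^2/(\alpha\epsilon)\bigr)$ with $\epsilon_g\asymp\alpha\tau$ and $\epsilon_H\asymp\min\{\gamma,\alpha\}$, does not match the theorem's $\tilde O\bigl(d(\rho/\tau^2 + \beta_1/\gamma^2 + \beta_1/(\alpha\epsilon))\bigr)$: your first term scales as $d\rho^2/(\alpha^2\tau^2)$, which is strictly worse than $d\rho/\tau^2$ whenever $\alpha$ is small, so in general you would not have proven the claimed upper bound.
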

\begin{remark} \label{rem:kfir}
The proof of this theorem actually reveals something stronger. Suppose
we do not require all local minima of $F$ to be optimally global and consider
the family of empirical risk local minimizers. The same upper bound on
the number of samples stated in \thmref{thm:mainTrue} also suffices
for ensuring that the value, $F(\hat{w})$, associated with the output of any
such algorithm is $\epsilon$-close to the value of some local
minimum of $F$.
\end{remark}

We note that our bounds scale with $1/\epsilon$. In the literature,
such bounds are often referred to as \emph{fast
rates}, because standard concentration bounds
typically scale with $1/\epsilon^2$ (e.g., standard VC-dimension bounds in the
agnostic setting (\cite[Theorem 6.8]{shalev2014understanding}).
\subsection{Applications}
\subsubsection{PCA}
In \secref{sec:PCA} we apply \thmref{thm:mainIntro} to a stochastic
formulation of Principal Component Analysis (PCA). Our goal is to approximately recover
the leading eigenvector of the correlation matrix $\bE[xx^\top]$,
where $x$ is drawn according to some unknown distribution $\cD$ with bounded
support. The standard measure of success is given by the non-convex objective $\min_{\|w\|=1} -w^\top
\bE[xx^\top] w$. It is known that the sample complexity of ERM for
this problem is $\Omega(1/\epsilon^2)$ (\cite{blanchard2007statistical, gonen2016subspace}). 

Better bounds can be achieved under eigengap assumptions: there exists a gap, denoted
$G_{1,2}$, between the two leading eigenvalues of $\bE[xx^\top]$. We can
use the matrix Bernstein inequality to show that given an
i.i.d. sample of size $n =\Omega(\log(d/\delta)/G_{1,2}^2)$, with
probability at least $1-\delta$, a gap of the same order also appears
in the empirical correlation matrix. We then show that if such a gap exists, then the empirical risk is strict-saddle, where the
parameters are inversely proportional to $G_{1,2}$. This allows us to
deduce a bound of order $1/(n \cdot G_{1,2})$ on the stability and the
generalization error. We summarize the above in the next theorem.
\begin{theorem} \label{thm:pcaIntro}
The sample complexity of PCA is
$\tilde{O}\left(\frac{1}{G_{1,2}^2}+\frac{1}{\epsilon \cdot G_{1,2}} \right)$. 
\end{theorem}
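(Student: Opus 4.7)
The plan is to reduce Theorem~\ref{thm:pcaIntro} to Theorem~\ref{thm:mainIntro} by (i) checking the regularity assumptions (A1)--(A3) for the PCA loss, (ii) certifying that with high probability the empirical correlation matrix inherits a constant fraction of the population eigengap, and (iii) converting that empirical eigengap into explicit strict-saddle parameters $(\alpha,\gamma,\tau)$ for the empirical risk $\hF(w) = -w^\top \hSig w$ over the sphere $\cW=\{w:\|w\|=1\}$. First I would fix a bound $B$ such that $\|x\|\le B$ almost surely and verify that the Lipschitz constant $\rho$, the Hessian bound $\beta_1$, and the Hessian-Lipschitz constant $\beta_2$ are all bounded by universal functions of $B$ alone (in particular, independent of $d$ and of $G_{1,2}$), so that the $1/\gamma,1/\tau,1/(\alpha\epsilon)$ terms in Theorem~\ref{thm:mainIntro} are driven entirely by the strict-saddle parameters of $\hF$.

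The second step is a standard matrix Bernstein argument for the centered sum $\hSig - \bE[xx^\top] = \tfrac{1}{n}\sum_{i=1}^n (x_i x_i^\top - \bE[xx^\top])$. Each summand has operator norm at most $2B^2$ and variance parameter $O(B^4)$, so with probability at least $1-\delta$ we obtain $\snorm{\hSig - \bE[xx^\top]} = O(B^2 \sqrt{\log(d/\delta)/n})$. Choosing $n \gtrsim \log(d/\delta)/G_{1,2}^2$ forces this deviation below $G_{1,2}/4$, and Weyl's inequality then implies that the top two empirical eigenvalues differ by at least $G_{1,2}/2$, while the leading empirical eigenvector is within $O(1/G_{1,2})\cdot\snorm{\hSig-\bE[xx^\top]}$ of the population one by the Davis--Kahan theorem. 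This yields the first term $\tilde{O}(1/G_{1,2}^2)$ in the claimed bound.

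Conditioned on the empirical gap event, the third step is the strict-saddle analysis of $\hF$ on the unit sphere. The Riemannian critical points of $\hF$ are exactly the unit eigenvectors of $\hSig$. At any non-leading eigenvector the Riemannian Hessian has a negative eigenvalue of magnitude at least the gap between the corresponding eigenvalue and the top one; in particular, at every critical point that is not a global minimum this magnitude is at least $G_{1,2}/2$, giving $\gamma \gtrsim G_{1,2}$. Near the leading empirical eigenvector, a direct computation shows that the Riemannian Hessian of $\hF$ is positive definite with smallest eigenvalue at least $G_{1,2}/2$, so $\hF$ is $\alpha$-strongly convex on a geodesic neighborhood with $\alpha \gtrsim G_{1,2}$ and radius $\tau$ bounded below by a universal constant (this is where one uses that the function is quadratic and the gap is controlled). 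Substituting into Theorem~\ref{thm:mainIntro}, the three terms $\beta_1/\gamma$, $\rho/\tau$, $2\rho^2/(\alpha\epsilon)$ become $O(1/G_{1,2})$, $O(1)$, $O(1/(G_{1,2}\epsilon))$, respectively, and adding the matrix Bernstein requirement yields the stated $\tilde{O}(1/G_{1,2}^2 + 1/(G_{1,2}\epsilon))$.

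The main obstacle I anticipate is the clean calibration of the strict-saddle constants on the sphere: one must verify that the Riemannian Hessian bounds transfer to a genuine quadratic growth neighborhood large enough that the constant $\tau$ does not hide factors of $G_{1,2}$ (which would ruin the rate), and one must handle the case where the empirical gap $G_{1,2}/2$ is realized between two eigenvalues that do not coincide with the population indexing. Both issues reduce to perturbation arguments on $\hSig$ relative to $\bE[xx^\top]$ using the deviation bound from step two.
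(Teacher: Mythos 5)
Your high-level plan (matrix Bernstein to transfer the eigengap to the empirical matrix, then read off strict-saddle constants proportional to $G_{1,2}$, then invoke the stability theorem) is the same as the paper's, and the concentration step matches Lemma~\ref{lem:concentrationGap}. But the substantive part of the paper's argument is Theorem~\ref{thm:pcaSaddle}, whose proof (Appendix~\ref{app:pca}) verifies one of the three conditions of Definition~\ref{def:geConst} at \emph{every} unit vector $w$, not merely at the critical points of $\hF$ on the sphere. Your verification only argues at the eigenvectors of $A$: ``at any non-leading eigenvector the Riemannian Hessian has a negative eigenvalue\dots'' and ``near the leading empirical eigenvector\dots the Riemannian Hessian\dots is positive definite.'' That handles a measure-zero set plus a vaguely specified neighborhood, and you yourself flag the extension to a full cover of the sphere as ``the main obstacle'' to be resolved by ``perturbation arguments.'' That extension \emph{is} the proof. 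The paper does it by expanding $w=\sum_i\alpha_i u_i$ in the empirical eigenbasis, using the small-gradient hypothesis $\sum_i\alpha_i^2(\lambda(w)-\lambda_i)^2\le\tau^2$ to run a Markov-type argument over dyadic shells $I_t=\{i:|\lambda(w)-\lambda_i|\le 2^t\tau\}$, and then splitting on whether the top index lies in $I_4$ (strongly convex case, condition 3) or not (strict saddle case, condition 2, via the explicit tangent direction $v=u_1-\alpha_1 w$). Without that global partition you have not established the hypothesis of Theorem~\ref{thm:mainIntro}, so the reduction does not close.

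Two smaller points. First, the Davis--Kahan step is unnecessary: the ERM stability argument never compares the empirical leading eigenvector to the population one; the $\tilde O(1/G_{1,2}^2)$ term comes solely from the Bernstein requirement that guarantees an empirical gap, not from an eigenvector perturbation bound. Second, your calibration of $\tau$ is off: $\tau$ in Definitions~\ref{def:ge} and~\ref{def:geConst} is the gradient-norm threshold, not a neighborhood radius (the radius is $\nu$, which the paper explicitly notes plays no role), and the paper takes $\tau=cG$ with $c<1/32$, i.e.\ $\tau\in\Theta(G_{1,2})$, not a universal constant. This gives $\rho/\tau=O(1/G_{1,2})$ rather than $O(1)$; it happens not to change the final rate because it is dominated by the $1/G_{1,2}^2$ term, but the bookkeeping as written is incorrect. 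Finally, since $\cW$ is the unit sphere you must use the constrained strict-saddle definition and the Lagrangian $\hL(w)=-\tfrac12 w^\top A w+\tfrac{\lambda(w)}{2}(\|w\|^2-1)$ (or the equivalent Riemannian formulation you gesture at); phrasing this carefully matters because Theorem~\ref{thm:mainIntro} in the constrained case rests on Lemmas~\ref{lem:smallGradConst}--\ref{lem:nonSaddleConst}, not the unconstrained ones.
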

This bound is superior to the general
$\tO(1/\epsilon^2)$ bound if $\epsilon = o(G_{1,2})$. One can claim that establishing the strict-saddle parameters of the
empirical risk already requires statistical tools which usually
already yield generalization bounds. Indeed, in the above
example, one can use the matrix Bernstein inequality to show that
$\tilde{O}(1/\epsilon^2_2)$ examples suffice in order to ensure that the
expected distance between the true correlation matrix and the empirical
correlation matrix (in operator norm) is at most
$\epsilon$. It is
then straightforward to establish the standard $1/\epsilon^2$ bound on
the generalization error. However, here we rely on Bernstein inequality only in order
to ensure that the gap in $\bE[xx^\top]$ appears also in the empirical
correlation matrix. Consequently, we are able to prove a better
bound (in a wide regime). 

\subsubsection{ICA}
In \secref{sec:PCA} we apply \thmref{thm:mainTrue} to a stochastic
formulation of Independent Component Analysis (ICA). Let $A$ be an orthonormal linear
transformation. Suppose that $x$ is uniform on $\{\pm 1\}^d$ and
let $y=Ax$. Our goal is to recover the matrix $A$ using the
observations $y$. As was shown in \cite{ge2015escaping}, this problem can be reduced to tensor
decomposition. Moreover, the latter can be formulated as a
strict saddle objective of the form (\ref{eq:risk}), which can be efficiently minimized using
SGD. 
\begin{theorem}
The sample complexity of ICA as formulated above is $\tilde{O}
\left(\poly(d)+\frac{d^{5/2}}{\epsilon} \right)$.
\end{theorem}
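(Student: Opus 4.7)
The plan is to specialize Theorem~\ref{thm:mainTrue} to the ICA formulation and verify that each of the three terms in its bound gives the claimed scaling in $d$ and $\epsilon$. First, I would recall the strict-saddle reformulation of ICA due to \cite{ge2015escaping}. Since $y = Ax$ with $A \in O(d)$ and $x$ uniform on $\{\pm 1\}^d$, the fourth moment tensor $\bE[y^{\otimes 4}]$ decomposes as a sum of rank-one tensors aligned with the columns of $A$ (plus a known background term), and recovering $A$ reduces to tensor decomposition. This can be cast as minimization of an expected loss of the form (\ref{eq:risk}) over a compact $\cW \subseteq \reals^d$, where $f(w,y)$ is a fixed degree-$4$ polynomial such as a penalized version of $-(w^\top y)^4$ restricted to a neighborhood of a single component.

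The next step is to verify assumptions (A1)--(A3) with polynomial-in-$d$ constants. Because $A$ is orthonormal and $x \in \{\pm 1\}^d$, we have $\|y\|_2 = \sqrt{d}$ almost surely, and since $\cW$ is bounded, a direct expansion of the degree-$4$ monomial (and its first three derivatives in $w$) yields bounds on $\rho$, $\beta_1$, and $\beta_2$ that are explicit polynomials in $d$. Then I would extract the strict-saddle parameters $(\alpha,\gamma,\tau)$ of the population risk $F$ from the proof of \cite{ge2015escaping}: their analysis shows $F$ is strict saddle, with $\alpha$ controlled from below by the spectral gap of the Hessian at each rank-one component of the tensor. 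A careful accounting of constants there gives $\alpha$ and $\beta_1/\alpha$ of order at most $d^{3/2}$, with $\gamma$ and $\tau$ also $\poly(d)$.

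Finally, I would substitute into Theorem~\ref{thm:mainTrue}. The two $\epsilon$-independent terms $d\rho/\tau^2$ and $d\beta_1/\gamma^2$ contribute $\poly(d)$, and the $\epsilon$-dependent term evaluates to $d\cdot\beta_1/(\alpha\epsilon) = d^{5/2}/\epsilon$, yielding $\tilde{O}(\poly(d) + d^{5/2}/\epsilon)$ as claimed. The main obstacle I anticipate is the second step: the strict-saddle parameters in \cite{ge2015escaping} are set up for algorithmic convergence rather than statistical analysis, so the dimension dependence of $\alpha$ must be isolated carefully, since it is precisely this quantity that dictates the leading $d^{5/2}/\epsilon$ term. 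A secondary issue is ensuring that the reformulation of ICA can indeed be made to satisfy (A1)--(A3) globally on a bounded $\cW$, which may require a suitable truncation or reparametrization of the domain consistent with \cite{ge2015escaping}.
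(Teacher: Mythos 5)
Your high-level plan---specialize \thmref{thm:mainTrue} to the tensor-decomposition reformulation of ICA, verify (A1)--(A3), and plug in the strict-saddle parameters from \cite{ge2015escaping}---is the same as the paper's. However, your accounting of the dimension factor is off, and you miss the step that actually produces the extra factor of $d$.

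The paper takes from \lemref{lem:geTensor} the parameters $\alpha=\Omega(1)$, $\gamma=7/d$, $\tau=1/\poly(d)$, and derivative bounds $\rho,\beta_1,\beta_2=O(\sqrt{d})$. Substituting into the $\epsilon$-dependent term of \thmref{thm:mainTrue} gives $d\cdot\beta_1/(\alpha\epsilon)=O(d^{3/2}/\epsilon)$, not $d^{5/2}/\epsilon$. This $d^{3/2}/\epsilon$ is the sample complexity for extracting a \emph{single} column of $A$, which is what the objective in \eqref{eq:geTensor2} targets. The remaining factor of $d$ comes from recovering all $d$ columns via deflation, so the total is $d\cdot O(d^{3/2}/\epsilon)=O(d^{5/2}/\epsilon)$. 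Your proposal instead claims $\beta_1/\alpha$ is of order $d^{3/2}$ and attributes the full $d^{5/2}$ to the constants in a single application of the theorem; that does not match the parameters in \lemref{lem:geTensor} ($\alpha=\Omega(1)$, $\beta_1=O(\sqrt{d})$, so $\beta_1/\alpha=O(\sqrt{d})$), and it silently conflates ``find one component'' with ``recover all of $A$.'' You also do not address the secondary point the paper handles in \appref{app:tensor}: the paper's strict-saddle definition in the constrained setting (\defref{def:geConst}) differs slightly from the one in \cite{ge2015escaping}, so it must be re-verified that the tensor objective satisfies the paper's Condition 3 (the two-sided bound via $\|\nabla L(w)\|^2$), not just the original Ge et al.\ condition.
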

This result is meaningful in the regime where $d$ is small and we are
interested in a high accuracy solution.

\subsection{Our approach}
As we discussed above, most of the literature on stability analysis
presumes some notion of strong convexity. Strict saddle objectives
resemble strongly convex functions in the following sense: it is provided that the
restriction of the objective to a small neighborhood around any local
minimum is strongly convex. However, there are several major
differences. First, as opposed to strongly convex functions, there may
exists several minima. More importantly, there are regions of the
domain where the function is non-convex. 

Our analysis essentially reduces to the strongly convex
setting by excluding the other scenarios listed in \defref{def:ge}. Namely, we provide bounds on how many examples are needed in order to ensure
that a minimizer corresponding to a slight change in the input must be
in a strongly convex region around a local minimum $w^\star$. There is one more
subtlety we need to tackle; we are not guaranteed that the minimizer
of the (unmodified) empirical risk coincides with $w^\star$. However, as we shall see, since we deal with average
stability and since all local minima are global, we may assume that
this is the case w.l.o.g.

\section{Preliminaries} \label{sec:pre}

\subsection{Stability and generalization error}
\begin{definition} \label{def:uniStability}
Let $(z_1,\ldots,z_n) \sim \cD^n$ and let $\hw$ be an ERM (see
\eqref{eq:erm}). For every $i \in [n]$, let $\hw_i
\in \argmin_w \frac{1}{n-1}\sum_{j \neq i} f_j(w)$ and let $\Delta_i =
f_i(\hw_i) - f_i(\hw)$.\footnote{We do not assume uniqueness. The definition applies to any arbitrary rule
  for picking minimizers.} We say that the ERM algorithm is on average
stable with stability rate $\epsstab: \bN \rightarrow \reals_{>0}$ if
\[
\Delta:=\bE \left [\frac{1}{n} \sum_{i=1}^n \Delta_i \right ] \le \epsstab(n)~.
\]
Here and in the sequel, the expectation is taken both over the randomness of the
algorithm and the draw of $(z_1,\ldots,z_n)$.
\end{definition}
For $(z_1,\ldots,z_n) \sim \cD^n$, we define the generalization error
of ERM by $\epsgen(n)=\bE[\hF(\hw)-F(\hw)]$. The next lemma relates
the stability rate to the generalization error (see
\cite[Theorem 13.2]{shalev2014understanding}).
\begin{lemma} \label{lem:stabilityToGeneralization}
For every $n$,
\[
\bE_{S \sim \cD^{n-1}}[L(\hw)-L(w^\star)] \le \bE_{S \sim \cD^n} [\Delta(S)]
\] 
Therefore, for every $n$, $\epsgen(n) = \epsstab(n)$.
\end{lemma}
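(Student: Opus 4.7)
The plan is to run the standard ``stability implies generalization'' symmetrization argument, whose two ingredients are both consequences of the i.i.d.\ assumption: (i) the sample $(z_1,\ldots,z_n)$ is exchangeable, so any quantity symmetric in the $z_i$'s equals its $i=1$ marginal; (ii) the leave-one-out minimizer $\hw_i$ is a function of $S\setminus\{z_i\}$ only and is therefore independent of $z_i$. Nothing about the strict saddle structure enters at this stage, only the definition of ERM.

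I would begin by expanding $\bE[\Delta(S)]$ by linearity into $\bE[\tfrac{1}{n}\sum_i f_i(\hw_i)] - \bE[\hF(\hw)]$. Exchangeability collapses the first sum to $\bE[f_1(\hw_1)]$; conditioning on $\hw_1$, which is independent of $z_1$, then yields $\bE[F(\hw_1)]$; and since $\hw_1$ is itself an ERM over $n-1$ i.i.d.\ examples, this equals $\bE_{S'\sim\cD^{n-1}}[F(\hw(S'))]$. So
\[
\bE_{S\sim\cD^n}[\Delta(S)] = \bE_{S'\sim\cD^{n-1}}[F(\hw(S'))] - \bE_{S\sim\cD^n}[\hF(\hw(S))]~.
\]

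To recover the excess risk on the left-hand side of the stated inequality, I would absorb $F(w^\star)$ into the second term: for any $w^\star\in\arg\min_{w\in\cW} F(w)$, the ERM property $\hF(\hw)\le\hF(w^\star)$ together with the unbiasedness $\bE[\hF(w^\star)]=F(w^\star)$ gives $\bE[\hF(\hw)]\le F(w^\star)$. Subtracting this from the identity above produces
\[
\bE_{S'\sim\cD^{n-1}}[F(\hw(S'))] - F(w^\star) \le \bE_{S\sim\cD^n}[\Delta(S)]~,
\]
which, under the convention $L\equiv F$, is the displayed inequality of the lemma. The identity $\epsgen(n)=\epsstab(n)$ then follows by applying the same two reductions to the definition $\epsgen(n)=\bE[\hF(\hw)-F(\hw)]$: exchangeability rewrites $\bE[\hF(\hw)]$ as $\bE[f_1(\hw)]$, and a ghost-sample argument rewrites $\bE[F(\hw)]$ through the leave-one-out minimizer, leaving exactly $\bE[\Delta]$.

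I do not expect a real technical obstacle; the only thing that requires care is bookkeeping, since the leave-one-out minimizer is trained on $n-1$ examples while the ERM is trained on $n$, so the indices and the sample sizes on the two sides of every expectation must be tracked consistently.
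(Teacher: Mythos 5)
The paper does not prove this lemma itself; it simply cites \cite[Theorem~13.2]{shalev2014understanding}. Your argument reproduces the standard symmetrization derivation that theorem encapsulates: linearity of expectation, exchangeability to collapse the stability sum to its $i=1$ term, independence of $\hw_1$ from $z_1$ to turn $\bE[f_1(\hw_1)]$ into $\bE[F(\hw_1)]$, the observation that $\hw_1$ is an ERM on an $(n-1)$-sample, and the ERM/unbiasedness argument $\bE[\hF(\hw)]\le \bE[\hF(w^\star)]=F(w^\star)$. For the displayed inequality this is exactly right, including the careful tracking of $\cD^{n-1}$ on the left and $\cD^n$ on the right.

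One caveat about the ``therefore'' clause, which you gloss over: with the paper's leave-one-out definition of $\Delta_i$, the identity you actually derive is
\[
\bE_{S\sim\cD^n}[\Delta(S)]=\bE_{S'\sim\cD^{n-1}}[F(\hw(S'))]-\bE_{S\sim\cD^n}[\hF(\hw(S))]~,
\]
which mixes sample sizes $n-1$ and $n$. The cited textbook theorem instead uses a \emph{replace}-one-out perturbation (swap $z_i$ for a fresh $z'_i$), for which both terms live at the same sample size and $\epsgen(n)=\epsstab(n)$ is an exact identity. With leave-one-out, the equality holds only up to the discrepancy between $\bE_{S\sim\cD^{n}}[F(\hw)]$ and $\bE_{S'\sim\cD^{n-1}}[F(\hw)]$; your ``ghost sample'' step quietly slides $n$ up to $n+1$ at exactly this point. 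This imprecision is inherited from the paper's informal statement rather than introduced by you; the inequality, which is what the downstream results actually use, is proved completely and by the same route as the cited source.
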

\subsection{Strict saddle functions} \label{sec:strictSaddle}
Due to their similarity to local extrema, saddle points raise a
fundamental challenge to optimization algorithms. Intuitively, the easier saddle
points are those for which second-order inrormation reveals a clear
descent direction. The following definition due to
\cite{sun2015nonconvex, ge2015escaping} captures this idea.
\begin{definition} \label{def:ge}
A twice continuously differentiable function $\hF: \reals^d
\rightarrow \reals$ is called $(\alpha, \gamma, \tau)$-strict saddle,
if it has no spurious local minimum, and for any point $x \in \reals^d$ at least one of the following conditions holds:
\begin{enumerate}
\item
$\|\nabla \hF(w))\| \ge \tau$
\item
$\lambda_{\min} (\nabla^2 \hF(w)) \le -\gamma$
\item
There exists $\nu>0$ and a local minimum $w^\star$ with $\|w-w^\star\| \le \nu$,
such that the restriction of $\hF$ to $2\nu$-neighborhood of $w^\star$ is
$\alpha$-strongly convex.\footnote{That is, for all $w$ in this
  neighborhood, $\nabla^2 F(w) \succeq \alpha I$}
\end{enumerate}
\end{definition}
\begin{remark} \label{rem:approx}
The requirement that every local minimum is globally optimal can be
relaxed. Namely, for a desired accuracy $\epsilon>0$, we may require
that every local minimum is $\epsilon/2$-optimal. Extending our
analysis to handle this case is straightforward.
\end{remark}
While \cite{ge2015escaping, sun2015nonconvex} also require a lower
bound on the magnitude of $\nu$ (which appears in the last condition), it turns out that
this quantity does not play any role in our analysis.

\section{Stability Bounds for Strict Saddle Empirical Risks: Unconstrained Setting}
In this section we consider the unconstrained setting (i.e., $\cW =
\reals^d$). Our main result (\thmref{thm:mainIntro}) follows from the
following theorem.
\begin{theorem} \label{thm:mainUnconstrained}
Let $\delta \in (0,1)$. Suppose that that the empirical risk $\hF$ is $(\alpha, \gamma, \tau)$-strict saddle (\defref{def:ge}) with probability at least $1-\delta$. If $n >  \max \left \{\frac{\rho}{\tau} ,
  \frac{\beta_1}{\gamma} \right\}$, then with
probability at least $1-\delta$, the expected generalization error and
stability rate of ERM are bounded by 
\[
\epsgen(n) = \epsstab(n) \le   \frac{2 \rho^2}{\alpha n}~.
\]
\end{theorem}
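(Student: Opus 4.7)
The plan is to bound, for each $i \in [n]$, the quantity $\Delta_i = f_i(\hw_i) - f_i(\hw)$ via Lipschitzness by showing $\|\hw - \hw_i\| = O(\rho/(\alpha n))$, and then invoke \lemref{lem:stabilityToGeneralization} to translate this into a generalization bound. The first step is to exploit the algebraic identity $\hF_i = \tfrac{n}{n-1}\hF - \tfrac{1}{n-1} f_i$ together with the first- and second-order optimality of $\hw_i$ for $\hF_i$. Setting $\nabla \hF_i(\hw_i) = 0$ and rearranging gives $\nabla \hF(\hw_i) = \tfrac{1}{n}\nabla f_i(\hw_i)$, so $\|\nabla \hF(\hw_i)\| \le \rho/n$ by (A1). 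Similarly $\nabla^2 \hF_i(\hw_i) \succeq 0$ yields $\nabla^2 \hF(\hw_i) \succeq -\tfrac{\beta_1}{n} I$ by (A2).

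Next I apply the strict saddle property of $\hF$ at the point $\hw_i$. Under the assumptions $n > \rho/\tau$ and $n > \beta_1/\gamma$, the two bounds above exclude conditions (1) and (2) of \defref{def:ge}, so $\hw_i$ must lie in an $\alpha$-strongly-convex $2\nu$-neighborhood of some local (and therefore global, by the strict saddle assumption) minimum $w^\star$ of $\hF$. This is where I invoke the w.l.o.g.\ remark from the paper's ``Our approach'' paragraph: since all local minima of $\hF$ share the same value, the ERM selection rule is free to pick $\hw := w^\star$ when computing $\Delta_i$, and this arbitrary-minimum freedom is preserved by the averaging in \defref{def:uniStability}. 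With that choice, both $\hw$ and $\hw_i$ sit inside the convex ball $B_{2\nu}(w^\star)$ on which $\hF$ is $\alpha$-strongly convex, so in particular the segment between them lies in this region.

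Inside the strongly convex region, gradient strong monotonicity gives
\[
\alpha \|\hw - \hw_i\| \le \|\nabla \hF(\hw) - \nabla \hF(\hw_i)\| = \|\nabla \hF(\hw_i)\| \le \rho/n,
\]
using $\nabla \hF(\hw) = 0$. Hence $\|\hw - \hw_i\| \le \rho/(\alpha n)$, and by (A1), $|\Delta_i| \le \rho^2/(\alpha n)$. Averaging over $i$ and taking expectation yields $\epsstab(n) \le \rho^2/(\alpha n)$ (the extra factor of two in the statement is just loose bookkeeping, e.g.\ absorbing the difference between $n$ and $n-1$), and \lemref{lem:stabilityToGeneralization} then gives the same bound for $\epsgen(n)$; everything is conditional on the $1-\delta$ event that $\hF$ is strict saddle. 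The principal obstacle I expect is the third step: justifying carefully that, even though $\hw$ is formally a single fixed function of $S$ while the strongly convex region containing $\hw_i$ may vary with $i$, the freedom of the argmin rule together with the ``all local minima are global'' hypothesis lets us reduce to the case where $\hw$ and $\hw_i$ share the same basin. This subtlety is what distinguishes the present setting from the standard strongly convex stability proof and must be argued at the level of the average $\bE[\tfrac{1}{n}\sum_i \Delta_i]$ rather than pointwise.
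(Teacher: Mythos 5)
Your proof follows the paper's overall structure: the same three preliminary facts ($\|\nabla \hF(\hw_i)\| \le \rho/n$, $\nabla^2 \hF(\hw_i) \succeq -\tfrac{\beta_1}{n}I$, and the w.l.o.g.\ reduction to $\hw=w^\star$), followed by a strong-convexity argument in the basin. The one genuine deviation is the final step. The paper combines the quadratic lower bound $\hF(\hw_i)-\hF(\hw) \ge \tfrac{\alpha}{2}\|\hw_i - \hw\|^2$ with the upper bound $\hF(\hw_i)-\hF(\hw) \le \tfrac{1}{n}\Delta_i$ (coming from $\hw_i$ being the minimizer of the leave-one-out objective) and the Lipschitz bound $\Delta_i \le \rho\|\hw_i - \hw\|$, yielding $\Delta_i^2 \le \tfrac{2\rho^2}{\alpha n}\Delta_i$ and hence $\Delta_i \le \tfrac{2\rho^2}{\alpha n}$. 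You instead reuse the gradient bound $\|\nabla\hF(\hw_i)\| \le \rho/n$ together with strong monotonicity of the gradient ($\alpha\|\hw - \hw_i\| \le \|\nabla\hF(\hw) - \nabla\hF(\hw_i)\|$ with $\nabla\hF(\hw)=0$) to bound $\|\hw - \hw_i\| \le \rho/(\alpha n)$ directly, and then close with one Lipschitz step. Your route is slightly more economical, avoids the divide-by-$\Delta_i$ cancellation, and in fact gives the marginally sharper constant $\rho^2/(\alpha n)$; the paper's $\tfrac{1}{n}\Delta_i$ upper bound has the modest advantage that it never touches $\nabla\hF(\hw)$ and so would survive a mild perturbation of the ``exact minimizer'' assumption. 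Both are textbook strongly-convex stability arguments.

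One comment on the subtlety you flag at the end, which is worth making explicit: the identification $\hw = w^\star$ must in principle be done per index $i$, since the basin containing $\hw_i$ may depend on $i$, while $\hw$ is a single vector. The paper's justification (``the stability looks at the empirical risk of $\hw$, which is equal to the empirical risk of $\bar w$'') is stated at the level of $\hF(\hw)$, but the individual summands $\Delta_i = f_i(\hw_i) - f_i(\hw)$ involve $f_i(\hw)$, not $\hF(\hw)$, and $f_i$ need not take the same value at two distinct global minimizers of $\hF$. You are right that the correct place to invoke the replacement is the averaged quantity $\Delta(S) = \tfrac{1}{n}\sum_i f_i(\hw_i) - \hF(\hw)$, where the dependence on the chosen $\hw$ indeed enters only through $\hF(\hw)$; but then the per-$i$ strong-convexity bound still compares $\hw_i$ to the basin's own $w^\star_i$, so some care is needed to reassemble the pieces (e.g.\ by bounding $\tfrac{1}{n}\sum_i f_i(\hw_i) - \tfrac{1}{n}\sum_i f_i(w^\star_i)$ and arguing separately about $\tfrac{1}{n}\sum_i f_i(w^\star_i)$ versus $\hF(\hw)$). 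Since the paper's own proof elides exactly this point, your proposal is no less rigorous than the original, but you are correct that this is the place where a fully careful writeup would have to do extra work.
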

The proof reduces to the strongly convex case by bounding the number of
examples that are needed in order to exclude the first two scenarios
listed in \defref{def:ge}. Throughout the rest of this section we
assume that $\hF$ is $(\alpha, \gamma, \tau)$-strict saddle.
\begin{lemma} \label{lem:smallGrad}
Let $n > \rho/\tau$ and $(z_1,\ldots,z_n) \in \cZ^n$. Then for any $i \in [n]$, $\|\nabla \hF(\hw_i)\| \le \tau$.
\end{lemma}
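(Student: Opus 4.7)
The plan is a short direct calculation exploiting the first-order optimality condition in the unconstrained setting and the Lipschitz assumption (A1).

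First I would recall that we are in the unconstrained regime $\cW=\reals^d$, so the leave-one-out minimizer $\hat w_i \in \argmin_{w}\,\tfrac{1}{n-1}\sum_{j\ne i} f_j(w)$ is a critical point of $\tfrac{1}{n-1}\sum_{j\ne i} f_j$. Equivalently,
\[
\sum_{j\ne i} \nabla f_j(\hat w_i) \;=\; 0.
\]
Next I would rewrite the gradient of the full empirical risk at $\hat w_i$ as
\[
\nabla \hat F(\hat w_i) \;=\; \frac{1}{n}\sum_{j=1}^n \nabla f_j(\hat w_i) \;=\; \frac{1}{n}\Bigl(\nabla f_i(\hat w_i) + \sum_{j\ne i}\nabla f_j(\hat w_i)\Bigr) \;=\; \frac{1}{n}\,\nabla f_i(\hat w_i),
\]
where the last equality uses the optimality condition above.

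Finally, assumption (A1) gives $\|\nabla f_i(\hat w_i)\|\le \rho$, so $\|\nabla \hat F(\hat w_i)\|\le \rho/n$. Using the hypothesis $n>\rho/\tau$ yields $\rho/n<\tau$, establishing the claim. The argument is essentially one line; there is no real obstacle, and in particular no stability or strict-saddle machinery is invoked here. The only subtle point worth emphasizing is that the unconstrained assumption $\cW=\reals^d$ is used precisely to replace $\hat w_i$ being a constrained optimum with the interior first-order condition; without it, a projected optimality condition would complicate the cancellation step.
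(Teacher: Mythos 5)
Your proof is correct and follows the same route as the paper: use first-order optimality of the leave-one-out minimizer to kill the sum $\sum_{j\ne i}\nabla f_j(\hat w_i)$, then bound the remaining term $\tfrac{1}{n}\nabla f_i(\hat w_i)$ by $\rho/n<\tau$ via (A1). The only cosmetic difference is that you write an exact equality $\nabla\hat F(\hat w_i)=\tfrac{1}{n}\nabla f_i(\hat w_i)$ where the paper invokes the triangle inequality (which collapses to the same thing since the deleted-sample gradient is zero).
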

\begin{proof}
Since $\hw_i$ minimizes $\frac{1}{n} \sum_{j \neq i} f_j(w)$, we have that 
\[
\hg_{-i}:=\frac{1}{n} \sum_{j \neq i} \nabla f_j(\hw_i) = 0~.
\]
Therefore, using the triangle inequality and the Lipschitzness of each
$f_i$, we obtain
\[
\|\nabla \hF(\hw_i)\| \le \|\hg_{-i} \|  + \frac{1}{n} \|  \nabla
f_i(\hw_i)\|  \le 0+ \rho/n < \tau~.
\]
\end{proof}
The proof of the next lemma has the same flavor.
\begin{lemma} \label{lem:nonSaddle}
Let $n > \beta_1/\gamma$ and $(z_1,\ldots,z_n) \in \cZ^n$. Then for any $i \in [n]$, $\lambda_{\min} (\nabla^2 \hF(\hw_i)) > - \gamma$.
\end{lemma}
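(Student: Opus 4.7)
The plan is to mirror the argument used for the gradient bound in Lemma~\ref{lem:smallGrad}, but applied to the minimum eigenvalue of the Hessian. The key observation is that $\hw_i$ is an unconstrained minimizer of $\frac{1}{n-1}\sum_{j \neq i} f_j$, and so its Hessian at this point must be positive semidefinite. Equivalently,
\[
\hH_{-i} := \frac{1}{n}\sum_{j \neq i} \nabla^2 f_j(\hw_i) \succeq 0,
\]
since rescaling by the positive constant $(n-1)/n$ preserves PSD-ness.

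I would then decompose the full empirical Hessian at $\hw_i$ as
\[
\nabla^2 \hF(\hw_i) = \hH_{-i} + \frac{1}{n}\nabla^2 f_i(\hw_i)
\]
and apply Weyl's inequality to lower-bound $\lambda_{\min}$. The first summand contributes at least $0$, and Assumption (A2) gives $\lambda_{\min}(\nabla^2 f_i(\hw_i)) \ge -\beta_1$, yielding
\[
\lambda_{\min}(\nabla^2 \hF(\hw_i)) \ge 0 - \frac{\beta_1}{n} = -\frac{\beta_1}{n}.
\]
The hypothesis $n > \beta_1/\gamma$ then gives $\beta_1/n < \gamma$, so $\lambda_{\min}(\nabla^2 \hF(\hw_i)) > -\gamma$, as desired.

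There is no real obstacle here: the argument is the second-order analogue of Lemma~\ref{lem:smallGrad}, with the first-order optimality condition $\nabla(\frac{1}{n-1}\sum_{j\neq i} f_j)(\hw_i) = 0$ replaced by the second-order optimality condition $\nabla^2(\frac{1}{n-1}\sum_{j\neq i} f_j)(\hw_i) \succeq 0$, and the Lipschitz bound $\|\nabla f_i\|\le \rho$ replaced by the spectral bound $\|\nabla^2 f_i\|_{\mathrm{sp}} \le \beta_1$ from Assumption~(A2). The only subtlety worth noting is that we are in the unconstrained setting ($\cW = \reals^d$), which is what lets us invoke the unconstrained second-order necessary condition at $\hw_i$ in the first place.
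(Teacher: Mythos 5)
Your proof is correct and follows essentially the same argument as the paper: both use the second-order necessary condition at $\hw_i$ to conclude $\hH_{-i}\succeq 0$, decompose $\nabla^2\hF(\hw_i)=\hH_{-i}+\tfrac{1}{n}\nabla^2 f_i(\hw_i)$, and lower-bound the minimum eigenvalue by $-\beta_1/n$ using Assumption~(A2). The only cosmetic difference is that you invoke Weyl's inequality while the paper writes out the Rayleigh quotient, and you make explicit the harmless $1/n$ versus $1/(n-1)$ rescaling that the paper leaves implicit.
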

\begin{proof}
By second-order conditions, $\hH_{-i}:=\frac{1}{n} \sum_{j \neq i}
\nabla^2  f_j(\hw_i)$ is positive semidefinite. Therefore, for all nonzero $v \in \reals^d$
\begin{align*}
\frac{v^\top \nabla^2 \hF(\hw_i) v}{v^\top v} =  \frac{v^\top \hH_{-i}
  v}{v^\top v} + \frac{1}{n} \frac{v^\top \nabla^2 f_i(\hw_i)
  v}{v^\top v} \ge 0 - \beta_1/n > -\gamma~.
\end{align*}
\end{proof}
It follows that for $n > \max\{\rho/\tau,\beta_1/\gamma\}$, we only need
to consider the third scenario listed in \defref{def:ge}.  
\begin{lemma} \label{lem:unconstStronglyConv}
For $n > \max\{\rho/\tau,\beta_1/\gamma\}$. Then,
\[
\epsgen(n) = \epsstab(n) = \frac{2 \rho^2}{\alpha n}~.
\]
\end{lemma}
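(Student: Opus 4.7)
The plan is to exploit Lemmas \ref{lem:smallGrad} and \ref{lem:nonSaddle} to force each leave-one-out minimizer $\hw_i$ into the third case of Definition \ref{def:ge}, and then replay the textbook stability proof for strongly-convex ERM inside the resulting local basin.

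Fix a sample on which $\hF$ is $(\alpha,\gamma,\tau)$-strict saddle. For $n > \max\{\rho/\tau,\beta_1/\gamma\}$ the two preceding lemmas rule out cases 1 and 2 of Definition \ref{def:ge} at every $\hw_i$, so case 3 yields a local (and hence global) minimum $w^\star_i$ of $\hF$ with $\|\hw_i - w^\star_i\|\le\nu$ and with $\hF$ being $\alpha$-strongly convex on the $2\nu$-ball around $w^\star_i$. Appealing to the remark made in the ``Our approach'' subsection --- namely, that because we work with average stability and all local minima of $\hF$ are global we may assume $\hw = w^\star_i$ without loss of generality --- both $\hw_i$ and $\hw$ sit in a common $\alpha$-strongly convex neighborhood, which reduces the argument to the familiar strongly-convex setting.

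At that point, decompose
\[
\hF(\hw_i) - \hF(\hw) \;=\; \tfrac{1}{n}\bigl(f_i(\hw_i) - f_i(\hw)\bigr) + \tfrac{n-1}{n}\bigl(\hF_{-i}(\hw_i) - \hF_{-i}(\hw)\bigr).
\]
Optimality of $\hw_i$ on the leave-one-out risk $\hF_{-i}$ makes the last bracket nonpositive, while local strong convexity combined with $\nabla\hF(\hw) = 0$ lower bounds the left-hand side by $\tfrac{\alpha}{2}\|\hw_i - \hw\|^2$. Combined with the Lipschitz estimate $f_i(\hw_i) - f_i(\hw) \le \rho\|\hw_i - \hw\|$, this forces $\|\hw_i - \hw\| \le \tfrac{2\rho}{\alpha n}$ and therefore $\Delta_i \le \tfrac{2\rho^2}{\alpha n}$. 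Averaging over $i$ and invoking Lemma \ref{lem:stabilityToGeneralization} yields both the stability rate and the generalization bound claimed by the lemma.

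The main obstacle is precisely the ``without loss of generality $\hw = w^\star_i$'' step, since the ERM output $\hw$ is a single hypothesis determined by the algorithm's tie-breaking rule and does not depend on the left-out index, whereas $w^\star_i$ does. The formal justification passes through the identity $\Delta = \bE\bigl[\tfrac{1}{n}\sum_i f_i(\hw_i)\bigr] - \bE[\min\hF]$, noting that $\bE[\min\hF] = \bE[\hF(w^\star_i)]$ holds for every $i$ because every local minimum is global, so the quantity being bounded is insensitive to which minimizer of $\hF$ we call $\hw$. Once this subtlety is dispatched, every other step is a routine manipulation of strong convexity and Lipschitzness.
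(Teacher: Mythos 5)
Your proposal mirrors the paper's proof step for step: the same reduction to the third case of Definition~\ref{def:ge} via Lemmas~\ref{lem:smallGrad} and~\ref{lem:nonSaddle}, the same ``w.l.o.g.\ $\hw = w^\star_i$'' identification, and the same triangle of strong convexity, leave-one-out optimality of $\hw_i$, and Lipschitzness of $f_i$ --- merely reorganized, since you first solve for $\|\hw_i-\hw\| \le \tfrac{2\rho}{\alpha n}$ and then apply Lipschitzness, whereas the paper chains $\Delta_i^2 \le \tfrac{2\rho^2}{\alpha n}\Delta_i$ and divides by $\Delta_i$. Your extra paragraph on justifying the w.l.o.g.\ step makes explicit the same reasoning the paper compresses into one remark (that the quantity $\Delta$ only sees $\hF(\hw)=\min\hF$, which is the same for every global minimizer), so this is the paper's own argument, not a different route.
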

\begin{proof} Let $(z_1,\ldots, z_n) \in \cZ^n$ for $n >
\max\{\rho/\tau,\beta_1/\gamma\}$ and fix some $i \in [n]$. According to
the previous two lemmas, $\hw_i$ lies in a neighborhood
around a local minimum $\bar{w}$ such that the restriction of $\hF$ to this
neighborhood is strongly convex. The crucial part is that since all
the local minima are global, for the sake of upper bounding the
stability we may assume w.l.o.g. that $\hw=\bar{w}$. Indeed, the
stability looks at the empirical risk of $\hw$, which is equal to the
empirical risk of $\bar{w}$ (here we can also allow an approximation
error of order $\epsilon$, see \remref{rem:approx}). From here the proof follows along
the lines of the standard proof in the Lipschitz and strongly
convex case (e.g., see \cite[Lemma 3]{gonen2016average}). We provide the details for completeness.

Fix some $i \in [n]$. By elementary properties of strongly convex functions, we have
\[
\hF(\hw_i) - \hF(\hw)  \ge \frac{\alpha}{2} \|\hw_i-\hw\|^2
\]
On the other hand, since $\hw_i$ minimizes the loss $w \in \cW \mapsto
\frac{1}{n} \sum_{j \neq i}  f_j(w)$, the suboptimality of $\hw_i$
w.r.t. the objective $\hF$ is
controlled by its suboptimality w.r.t. $f_i$, i.e.
\[
\hF(\hw_i) - \hF(\hw) \le \frac{1}{n}\Delta_i  
\]
Using Lipschitzness of $f_i$, we have
\[
\Delta_i \le \rho \|\hw_i-\hw\|
\]
Combining the above, we obtain
\[
\Delta_i^2 \le \rho^2 \|\hw_i-\hw\|^2 \le \frac{2\rho^2}{\alpha}
(\hF(\hw_i) - \hF(\hw)) \le \frac{2\rho^2}{\alpha n} \Delta_i 
\]
Dividing by $\Delta_i$ (we can assume w.l.o.g. that $\Delta_i > 0$) we
conclude the proof.
\end{proof}
This concludes the proof of \thmref{thm:mainUnconstrained}.


\section{Stability Bounds for Strict Saddle Empirical Risks: Constrained
  Setting}
We now consider the case where $\cW$ is described using equality constraints:
\[
\cW = \{w \in \reals^d: c_i(w)=0,~i=1,\ldots, m\}~,
\]
where for each $i$, $c_i(w)$ is twice continuously differentiable. 
\subsection{First and second-order conditions}
In this part we recall basic facts on first and second-order
conditions in the constrained setting (see for example \cite{borwein2010convex}). We introduce the Lagrangian $\hL: \reals^d \times \reals^m \rightarrow \reals$:
\[
\hL(w,\lambda) = \hF(w) + \sum_{i=1}^m \lambda_i c_i(w)~.
\]
We call a vector $\lambda \in \reals^m$ a Lagrange multiplier for $w \in \cW$ if $w$ is a critical point of $\hL(\cdot,\lambda)$. A vector $w \in \cW$ satisfies the linear independence constraint qualification (LICQ) condition if the set $\{\nabla c_i(w): i \in [m]\}$ is linearly independent.
\begin{theorem} \textbf{(KKT conditions)} \label{thm:kkt}
If $w \in \cW$ is a local minimum of $\hF$ and LICQ holds at $w$, then there exists a Lagrange multiplier $\lambda$ for $w$. 
\end{theorem}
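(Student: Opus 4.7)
This is a classical necessary condition for constrained optimization; my plan is to follow the standard "feasible curve" argument, splitting it into three self-contained steps.

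First I would identify the tangent space at $w$, namely $T_w = \{v \in \reals^d : \nabla c_i(w)^\top v = 0 \text{ for all } i \in [m]\}$, and reduce the theorem to showing $\nabla \hF(w) \perp T_w$. Indeed, once this geometric fact is in hand, LICQ tells us that $\{\nabla c_i(w)\}_{i=1}^m$ is a basis for $T_w^\perp$, so we can expand $\nabla \hF(w) = -\sum_{i=1}^m \lambda_i \nabla c_i(w)$ uniquely; together with feasibility $c_i(w)=0$, this is exactly the statement that $w$ is a critical point of $\hL(\cdot,\lambda)$.

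Second, I would produce a supply of feasible curves tangent to arbitrary directions in $T_w$. Fix $v \in T_w$. Let $C(w) \in \reals^{m \times d}$ denote the Jacobian of $c=(c_1,\ldots,c_m)$ at $w$. By LICQ, $C(w)$ has rank $m$, so the map $\Phi(t,u) = c(w + tv + C(w)^\top u)$ from $\reals \times \reals^m$ to $\reals^m$ satisfies $\Phi(0,0)=0$ and has an invertible partial derivative $\partial_u \Phi(0,0) = C(w) C(w)^\top$. The Implicit Function Theorem yields a smooth curve $u(t)$ with $u(0)=0$ and $u'(0)=0$ such that $\gamma(t) := w + tv + C(w)^\top u(t)$ lies in $\cW$ for small $|t|$; by construction $\gamma(0)=w$ and $\gamma'(0)=v$.

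Third, I would invoke the local minimality of $w$. Since $\hF(\gamma(t)) \ge \hF(w)$ for all small $|t|$, differentiating at $t=0$ and using $\gamma'(0)=v$ gives $\nabla \hF(w)^\top v \ge 0$; applying the same argument to the curve obtained from $-v$ yields the reverse inequality, hence $\nabla \hF(w)^\top v = 0$ for every $v \in T_w$. This is the orthogonality required by the first step, and the proof is complete.

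The conceptual core, and the only nontrivial step, is the curve construction via the Implicit Function Theorem: without LICQ one cannot guarantee that every tangent vector is realized by a feasible perturbation, and the conclusion can fail (standard counterexamples involve two constraints whose gradients align at $w$). Everything else is essentially bookkeeping: the reduction to orthogonality and the final linear-algebra expansion in the constraint-gradient basis are immediate once the tangent characterization is justified.
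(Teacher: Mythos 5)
The paper states this theorem without proof, as a recalled classical result cited to a standard reference on constrained optimization, so there is no in-paper argument to compare yours against. What you wrote is the standard textbook proof of the Lagrange multiplier rule under LICQ, and it is correct: LICQ is used first to make $C(w)C(w)^\top$ invertible so that the implicit function theorem produces a feasible curve $\gamma$ with $\gamma(0)=w$ and $\gamma'(0)=v$ for any prescribed $v$ in the tangent space, and second to make $\{\nabla c_i(w)\}_{i=1}^m$ a basis of $T_w^\perp$, so that the orthogonality $\nabla \hF(w) \perp T_w$ yields a (unique) multiplier vector $\lambda$ for which $w$ is a critical point of $\hL(\cdot,\lambda)$, exactly as the paper defines it. One small stylistic remark: the curve produced by the implicit function theorem is defined on a two-sided neighborhood of $t=0$, so local minimality of $w=\gamma(0)$ already forces the derivative of $t \mapsto \hF(\gamma(t))$ to vanish at $t=0$; passing to the one-sided inequality and then to the separate curve tangent to $-v$ is correct but redundant.
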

Note that $\lambda$ can be found analytically using
\[
\lambda(w) = -(C(w))^\dagger \nabla \hF(w)~,
\] 
where $C$ is the matrix whose columns are $\nabla c_1(w), \ldots, \nabla c_m(w)$. In the sequel we often use the notation 
$$
\hL(w) = \hL(w,\lambda(w)),~\nabla \hL(w) = \nabla_w \hL(w,\lambda(w)),~\nabla^2 \hL(w) = \nabla _{ww} ^2 \hL(w,\lambda(w))~.
$$ 
The tangent space at any point $w \in \cW$ is defined by $\cT(w) = \{v \in \reals^d:\,(\forall i \in [m])~ v^\top \nabla c_i(w)=0 \}$. Following this notation, we observe that $\nabla \hL(w)$ is simply the projection of $\nabla \hL(w)$ onto the tangent space $\cT(w)$. In particular, \thmref{thm:kkt} provides conditions under which this projection vanishes. The next theorem extends the standard second-order conditions to our setting. 
\begin{theorem} \label{thm:secConst} \textbf{(Second-order necessary conditions)}
If $w \in \cW$ is a local minimum of $\hL$ and the set $\{\nabla c_i(w): i \in [m]\}$ is linearly independent, then for all $v \in \cT(w)$,
\[
v^\top \nabla^2 \hL(w) v \ge 0~.
\]
\end{theorem}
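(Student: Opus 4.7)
The plan is to reduce the constrained second-order condition to the unconstrained one by testing $\hF$ along a smooth feasible curve tangent to $v$ at $w$. Since $\{\nabla c_i(w)\}_{i=1}^m$ is linearly independent, the constraint map $c=(c_1,\ldots,c_m)$ has full-rank Jacobian at $w$, so by the implicit function theorem the feasible set $\cW$ is locally a $(d-m)$-dimensional $C^2$ manifold whose tangent space at $w$ is exactly $\cT(w)$. Consequently, for every $v \in \cT(w)$ one can construct a twice continuously differentiable curve $\gamma:(-\epsilon,\epsilon) \to \cW$ with $\gamma(0)=w$ and $\gamma'(0)=v$.

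With such a curve in hand, define $\varphi(t)=\hF(\gamma(t))$. Since $w$ is a local minimum of $\hF$ on $\cW$ (note that $\hL(w,\lambda(w))=\hF(w)$ for $w \in \cW$, so the hypothesis is equivalent), the scalar function $\varphi$ has a local minimum at $t=0$, and therefore $\varphi''(0) \ge 0$. A direct chain-rule computation gives
\[
\varphi''(0) = v^\top \nabla^2 \hF(w)\,v + \nabla \hF(w)^\top \gamma''(0).
\]
The second summand is not yet manifestly a quadratic form in $v$, and the goal is to eliminate the unknown $\gamma''(0)$.

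To achieve this, I would invoke \thmref{thm:kkt} (LICQ holds by hypothesis) to obtain a Lagrange multiplier $\lambda(w)$ satisfying $\nabla \hF(w)=-\sum_{i=1}^m \lambda_i(w)\,\nabla c_i(w)$. In parallel, differentiating each identity $c_i(\gamma(t)) \equiv 0$ twice at $t=0$ yields
\[
v^\top \nabla^2 c_i(w)\, v + \nabla c_i(w)^\top \gamma''(0) = 0, \qquad i=1,\ldots,m.
\]
Multiplying the $i$-th equation by $\lambda_i(w)$, summing over $i$, and substituting the KKT identity produces
\[
\nabla \hF(w)^\top \gamma''(0) \;=\; \sum_{i=1}^m \lambda_i(w)\, v^\top \nabla^2 c_i(w)\, v,
\]
so the awkward term becomes a quadratic form in $v$ with the Lagrange multipliers as coefficients. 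Plugging back into the expression for $\varphi''(0)$ and recalling $\nabla^2 \hL(w) = \nabla^2 \hF(w) + \sum_i \lambda_i(w) \nabla^2 c_i(w)$ gives $0 \le \varphi''(0) = v^\top \nabla^2 \hL(w)\, v$, which is exactly the claim.

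The main obstacle is the very first step: constructing the feasible curve $\gamma$ realizing an arbitrary tangent direction. This is precisely where LICQ is used, because without it the tangent cone to $\cW$ at $w$ can be strictly smaller than the linear subspace $\cT(w)$ (e.g.\ at cusps), and some $v \in \cT(w)$ would fail to be the initial velocity of any smooth feasible curve. Once the implicit function theorem supplies $\gamma$, the remainder is a routine differentiation combined with the KKT identity.
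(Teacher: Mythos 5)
Your proof is correct and complete. The paper itself does not prove this theorem: it is stated under the heading ``we recall basic facts on first and second-order conditions in the constrained setting'' with a pointer to a textbook reference, so there is no in-paper argument to compare against. What you have written is the standard textbook proof, and it is the right one: LICQ plus the implicit function theorem give a $C^2$ parametrization of $\cW$ near $w$, hence a $C^2$ feasible curve $\gamma$ with $\gamma(0)=w$, $\gamma'(0)=v$ for any $v \in \cT(w)$; one-dimensional second-order necessity gives $\varphi''(0)\ge 0$ for $\varphi = \hF\circ\gamma$; and the KKT stationarity identity together with twice differentiating $c_i\circ\gamma \equiv 0$ converts the curvature term $\nabla\hF(w)^\top\gamma''(0)$ into $\sum_i \lambda_i(w)\, v^\top \nabla^2 c_i(w)\, v$, assembling exactly $v^\top \nabla^2\hL(w) v$.

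Two small remarks. First, you are right to flag the wording ``local minimum of $\hL$'': taken literally as an unconstrained minimum of $w\mapsto\hL(w,\lambda(w))$ over $\reals^d$ this would not immediately yield the claim (because $\lambda(w)$ depends on $w$), but since $\hL(w,\lambda(w))=\hF(w)$ on $\cW$, the intended and correct reading is a local minimum of $\hF$ restricted to $\cW$, which is what your curve argument uses. Second, your note that LICQ is precisely what lets every $v\in\cT(w)$ be realized by a smooth feasible curve is the genuine content of the regularity hypothesis; without it the argument would only apply to directions in the (possibly smaller) tangent cone. Both points are handled correctly.
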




\subsection{Strict saddle property in the constrained setting}
We now provide a definition of the strict saddle property in the
constrained setting.
\begin{definition} \label{def:geConst}
A twice continuously differentiable function $\hF: \cW
\rightarrow \reals$ with constrains $c_i(w)$ and associated Lagrangian
$L$ is called $(\alpha,
\gamma, \tau)$-strict saddle if it has no spurious local
minimum, and for any point $w \in \cW$ at least one of the following conditions holds:
\begin{enumerate}
\item
$\|\nabla \hL(w)\| \ge \tau$
\item
There exists a unit vector $v \in \cT(w)$ s.t. $v^\top \nabla^2 L (w) v \le -\gamma$
\item
There exists a local minimum $w^\star$ such that 
$$
\frac{\|\nabla L(w)\|^2 }{2 \alpha} \ge \hL(w) - \hL(w^\star) \ge \frac{\alpha}{2} \|w-w^\star\|^2
$$
\end{enumerate}
\end{definition}
While our last condition is slightly different from its counterparts in
\cite{ge2015escaping, sun2015nonconvex}, we argue that it is often easier to
establish the condition stated here (e.g., see
\appref{app:tensor}).\footnote{Actually, it seems that our condition
  is also required in the proof of \cite{ge2015escaping}[Lemma 34]
  (see equation 121).} 

\subsection{Analysis in the constrained setting}
Throughout the section we prove that \thmref{thm:mainIntro} holds also
in the constrained setting. We assume that $\cW$ is described using $m$
equality constraints of the form $c_i(w)=0$ and that the LICQ holds
for all $w \in \cW$. 

As in the constrained setting, we first bound
the number of examples that are needed in order to exclude the two
first scenarios listed in \defref{def:geConst}. 
\begin{lemma} \label{lem:smallGradConst}
Let $n > \rho/\tau$ and $(z_1,\ldots,z_m) \in \cZ^n$. Then for any $i \in [n]$, $\|\nabla \hL(\hw_i)\| \le \tau$.
\end{lemma}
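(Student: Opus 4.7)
The plan is to mirror the unconstrained argument of \lemref{lem:smallGrad}, replacing the vanishing of the full gradient by the vanishing of its projection onto the tangent space at $\hat{w}_i$. Since the paper has just observed that $\nabla \hat{L}(w)$ equals $P_{\cT(w)} \nabla \hat{F}(w)$ (the projection of the ordinary gradient onto $\cT(w)$, obtained by plugging in the closed-form multiplier $\lambda(w) = -(C(w))^\dagger \nabla \hat{F}(w)$), the whole proof amounts to splitting $\nabla \hat{F}(\hat{w}_i)$ into a leave-one-out part and an $f_i$ part, projecting both, and observing that the leave-one-out projection vanishes by KKT applied to the leave-one-out ERM.

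Concretely, I would first invoke \thmref{thm:kkt} on the problem $\hat{w}_i \in \argmin_{w \in \cW} \frac{1}{n-1}\sum_{j \ne i} f_j(w)$. Since LICQ is assumed to hold at every $w \in \cW$, there exists a Lagrange multiplier $\mu \in \reals^m$ with
\[
\sum_{j \ne i} \nabla f_j(\hat{w}_i) + C(\hat{w}_i)\mu = 0,
\]
which is equivalent to $P_{\cT(\hat{w}_i)}\sum_{j \ne i} \nabla f_j(\hat{w}_i) = 0$, since $I - C(C)^\dagger$ annihilates the column space of $C(\hat{w}_i)$.

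Next, I would write
\[
\nabla \hat{F}(\hat{w}_i) = \frac{1}{n}\nabla f_i(\hat{w}_i) + \frac{1}{n}\sum_{j \ne i}\nabla f_j(\hat{w}_i),
\]
apply $P_{\cT(\hat{w}_i)}$ to both sides, use the previous step to kill the second term, and obtain $\nabla \hat{L}(\hat{w}_i) = \frac{1}{n} P_{\cT(\hat{w}_i)} \nabla f_i(\hat{w}_i)$. Non-expansiveness of the orthogonal projection and the Lipschitz bound $\|\nabla f_i(\hat{w}_i)\| \le \rho$ from assumption \textbf{(A1)} then give
\[
\|\nabla \hat{L}(\hat{w}_i)\| \le \frac{1}{n}\|\nabla f_i(\hat{w}_i)\| \le \frac{\rho}{n} < \tau,
\]
using the hypothesis $n > \rho/\tau$.

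The only subtle point, and the one I would be most careful about, is the identification $\nabla \hat{L}(w) = P_{\cT(w)} \nabla \hat{F}(w)$: one must make sure that the analytically chosen $\lambda(\hat{w}_i)$ (from the definition in the paper) and the multiplier $\mu$ produced by KKT for the leave-one-out problem yield consistent statements. Both considerations only rely on the projection identity, so the argument goes through without having to explicitly compare multipliers — the leave-one-out KKT is used purely to assert that the tangential component of $\sum_{j \ne i}\nabla f_j(\hat{w}_i)$ vanishes, which is what we need.
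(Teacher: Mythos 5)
Your proof is correct and takes essentially the same route as the paper: decompose the gradient at $\hw_i$ into a leave-one-out part and the $\frac{1}{n}\nabla f_i$ term, kill the former via the first-order condition for the leave-one-out minimizer, and bound the latter by $\rho/n$ using Lipschitzness. In fact your formulation through the identity $\nabla \hL(w) = P_{\cT(w)}\nabla \hF(w)$ and the non-expansiveness of the projection is more careful than the paper's, whose expression for $\tg_{-i}$ plugs in $\lambda_s(\hw_i)$ (defined from $\nabla \hF$) where the leave-one-out multiplier is actually needed; your version makes explicit that only the vanishing of the tangential component of $\sum_{j\ne i}\nabla f_j(\hw_i)$ is required, so no multiplier comparison is necessary.
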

\begin{proof}
Since $\hw_i$ minimizes the risk w.r.t. $\frac{1}{n} \sum_{j \neq i}
f_j(w)$, we have that 
\[
\tg_{-i} = \frac{1}{n} \sum_{j \neq i} \nabla f_j(\hw_i) -
\sum_{s=1}^m \lambda_s(\hw_i) \nabla c_s(w)= 0~.
\]
Therefore, using the triangle inequality, we obtain
\[
\|\nabla \hL(\hw_i)\| \le \|\tg_{-i}\| +\frac{1}{n} \| \nabla
f_i(\hw_i)\|   \le \rho/n < \tau~.
\]
\end{proof}
\begin{lemma} \label{lem:nonSaddleConst}
Let $n > \beta_1/\gamma$ and $(z_1,\ldots,z_m) \in \cZ^n$. Then for any $i \in [n]$ and $v \in \cT(\hw_i)$  $v^\top \nabla^2 (\hL(\hw_i)) v \ge - \gamma$.
\end{lemma}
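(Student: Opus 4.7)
The plan is to mirror the unconstrained proof of Lemma~\ref{lem:nonSaddle}, substituting the constrained second-order necessary conditions (Theorem~\ref{thm:secConst}) for the unconstrained ones, and working with Lagrangian Hessians rather than raw objective Hessians.

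Concretely, I would first invoke Theorem~\ref{thm:secConst} at $\hw_i$, viewed as a constrained local minimum of the leave-one-out empirical risk $\hF_{-i}(w) := \frac{1}{n-1}\sum_{j \neq i} f_j(w)$ on $\cW$. LICQ at $\hw_i$ is standing by assumption, so there is a Lagrange multiplier $\lambda^{-i}(\hw_i)$ and, writing $\hL_{-i}$ for the associated Lagrangian, the theorem yields
\[
v^\top \nabla^2 \hL_{-i}(\hw_i)\, v \,\ge\, 0 \qquad \forall\, v \in \cT(\hw_i).
\]
This is the constrained analogue of the PSD matrix $\hH_{-i}$ that drove the proof of Lemma~\ref{lem:nonSaddle}.

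Next, I would decompose the full-sample Lagrangian Hessian as a positive multiple of the PSD piece above plus an $O(1/n)$ perturbation coming from the $i$-th example. Writing $\hF(w) = \tfrac{n-1}{n}\hF_{-i}(w) + \tfrac{1}{n}f_i(w)$ gives a direct additive decomposition of $\nabla^2\hF(\hw_i)$; since $\lambda(w) = -(C(w))^\dagger \nabla\hF(w)$, the Lagrange multipliers for the two problems at $\hw_i$ differ additively by $\tfrac{1}{n}(C(\hw_i))^\dagger \nabla f_i(\hw_i)$. Combining, the residual relative to (a scaled) $\nabla^2\hL_{-i}(\hw_i)$ consists of $\tfrac{1}{n}\nabla^2 f_i(\hw_i)$ plus a correction of the form $\tfrac{1}{n}\sum_s [(C(\hw_i))^\dagger \nabla f_i(\hw_i)]_s\, \nabla^2 c_s(\hw_i)$. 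Evaluating the quadratic form against a unit $v \in \cT(\hw_i)$, the PSD piece contributes a non-negative amount and Assumption~(A2) bounds the $\nabla^2 f_i$ term by $-\beta_1/n$. For $n > \beta_1/\gamma$ the aggregate lies above $-\gamma$, as required.

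The main obstacle is the residual constraint-Hessian term produced by the Lagrange-multiplier mismatch between the full and leave-one-out problems: it has no counterpart in Lemma~\ref{lem:nonSaddle}, where the two stationarity conditions coincide up to the single $\tfrac{1}{n}\nabla^2 f_i$ correction. Obtaining a clean one-line argument therefore requires either an implicit boundedness/Lipschitz control on the constraint data (which in all applications considered---e.g.\ the unit-sphere constraint used in \secref{sec:PCA}---is automatic, with $\|\nabla^2 c_s\|$ and $\|(C(\hw_i))^\dagger\|$ uniformly controlled), or a sharper decomposition that cancels the correction on $\cT(\hw_i)$. I expect the intended route to be the former, with the constraint data absorbed into the effective smoothness constant $\beta_1$ on the tangent space.
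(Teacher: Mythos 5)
Your approach is structurally identical to the paper's: both invoke the second-order necessary conditions (Theorem~\ref{thm:secConst}) at $\hw_i$, viewed as a constrained minimizer of the leave-one-out risk, to obtain a quadratic form that is PSD on $\cT(\hw_i)$, and then decompose $\nabla^2\hL(\hw_i)$ as that PSD piece plus an $O(1/n)$ correction controlled via Assumption~(A2). In the paper's words, $\tH_{-i} := \tfrac{1}{n}\sum_{j\neq i}\nabla^2 f_j(\hw_i) + \sum_s\lambda_s\,\nabla^2 c_s(\hw_i)$ is declared PSD on $\cT(\hw_i)$ by the second-order conditions, and then $v^\top\nabla^2\hL(\hw_i)v \ge v^\top\tH_{-i}v + \tfrac{1}{n}v^\top\nabla^2 f_i(\hw_i)v$ is asserted.

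What you have added, correctly, is the observation that the paper's one-line proof glosses over a real issue. The $\lambda_s$ in $\tH_{-i}$ cannot simultaneously be the full-problem multiplier $\lambda(\hw_i) = -(C(\hw_i))^\dagger\nabla\hF(\hw_i)$, which is what makes $\nabla^2\hL(\hw_i) = \tH_{-i} + \tfrac{1}{n}\nabla^2 f_i(\hw_i)$ an exact identity, and the leave-one-out multiplier $\lambda^{-i}(\hw_i) = -(C(\hw_i))^\dagger\nabla\hF_{-i}(\hw_i)$, which is what Theorem~\ref{thm:secConst} needs in order to deliver $v^\top\tH_{-i}v\ge 0$ on $\cT(\hw_i)$. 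The two multipliers differ by $\tfrac{1}{n}(C(\hw_i))^\dagger\nabla f_i(\hw_i)$, which injects the extra term $\tfrac{1}{n}\sum_s\bigl[(C(\hw_i))^\dagger\nabla f_i(\hw_i)\bigr]_s\,\nabla^2 c_s(\hw_i)$ into the quadratic form; this term has no definite sign on $\cT(\hw_i)$, yet the paper's first ``$\ge$'' silently absorbs it. Your proposed repair---uniform control on $\|(C(w))^\dagger\|$ and $\|\nabla^2 c_s(w)\|$ so that the extra contribution is again $O(\rho/n)$, folded into an effective tangent-space smoothness constant---is the right one and is implicitly what the paper relies on: for the unit-sphere constraint of \secref{sec:PCA}, $(C(w))^\dagger = w^\top$ and $\nabla^2 c = I$, so the extra term contributes at most $\rho/n$ to the quadratic form on $\cT(\hw_i)$. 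Making this assumption explicit (and correspondingly adjusting the threshold $n > \beta_1/\gamma$ to account for the constraint curvature) would turn the paper's heuristic into a complete argument. So: same route as the paper, but you have surfaced a genuine gap in its own proof and sketched the correct fix.
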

\begin{proof}
By second-order conditions, when restricted to $\cT(\hw_i)$, $\tH_{-i}:=\frac{1}{n} \sum_{j \neq i} \nabla^2  f_j(\hw_i)+\sum_{s=1}^m \lambda_s(w) \nabla^2 c_s(w)$ is positive semidefinite. Therefore, for every (nonzero) $v \in \cT(\hw_i)$, 
\begin{align*}
\frac{v^\top \nabla^2 \hL(\hw_i) v}{v^\top v}  \ge  \frac{v^\top
  \tH_{-i}  v}{v^\top v} + \frac{1}{n}  \frac{v^\top \nabla^2
  f_i(\hw_i)  v}{v^\top v}   \ge 0 - \beta_1/n > -\gamma~.
\end{align*}
\end{proof}
It follows that for $n > \max\{\rho/\tau,\beta_1/\gamma\}$, we only
need to consider the third scenario listed in
\defref{def:geConst}. The proof of the next lemma is almost identical to the
proof of \lemref{lem:unconstStronglyConv} and is therefore given in
the appendix (\appref{sec:omitted}). 
\begin{lemma} \label{lem:nonSaddleConst}
For $n > \max\{\rho/\tau,\beta_1/\gamma\}$ we have:
\[
\epsgen(n) = \epsstab(n) \le \frac{2 \rho^2}{\alpha n}~.
\]
\end{lemma}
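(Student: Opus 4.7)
The plan is to mimic the proof of \lemref{lem:unconstStronglyConv} step by step, using the Lagrangian $\hL$ in place of $\hF$ wherever the strict-saddle geometry is invoked, and exploiting the fact that on the feasible set $\cW$ the two functions coincide. First, I would invoke \lemref{lem:smallGradConst} and the constrained version of \lemref{lem:nonSaddleConst} to conclude that for $n > \max\{\rho/\tau, \beta_1/\gamma\}$ every leave-one-out minimizer $\hw_i$ violates both condition 1 and condition 2 of \defref{def:geConst}. Hence each such $\hw_i$ must fall into case 3: there exists a local minimum $\bar w \in \cW$ of $\hF$ satisfying
\[
\hL(\hw_i) - \hL(\bar w) \ge \frac{\alpha}{2}\|\hw_i - \bar w\|^2.
\]

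Next, I would perform the same WLOG reduction used in the unconstrained case. Since by the strict-saddle hypothesis all local minima of $\hF$ over $\cW$ are global, they share a common empirical value, and because the average stability depends only on $f_i$-values evaluated at two minimizers of $\hF$, I may assume $\hw = \bar w$. Now the key observation specific to the constrained setting: for every $w \in \cW$ the equality constraints $c_s(w) = 0$ make the Lagrange multiplier terms vanish, so $\hL(w) = \hF(w)$. Applying this to $\hw_i, \hw \in \cW$, the inequality above translates into
\[
\hF(\hw_i) - \hF(\hw) \ge \frac{\alpha}{2}\|\hw_i - \hw\|^2,
\]
which is exactly the $\alpha$-strong-convexity lower bound used in the unconstrained proof.

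From this point the argument proceeds verbatim as in \lemref{lem:unconstStronglyConv}: leave-one-out optimality of $\hw_i$ gives $\hF(\hw_i) - \hF(\hw) \le \frac{1}{n}\Delta_i$, assumption (A1) gives $\Delta_i \le \rho\|\hw_i - \hw\|$, and chaining the three inequalities yields $\Delta_i^2 \le \frac{2\rho^2}{\alpha n}\Delta_i$, hence $\Delta_i \le 2\rho^2/(\alpha n)$. Averaging over $i$ and invoking \lemref{lem:stabilityToGeneralization} concludes the proof.

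The only genuinely new ingredient, and where I expect the main conceptual care to lie, is the transition from the Lagrangian formulation in \defref{def:geConst}(3) to a statement about $\hF$. This relies crucially on the fact that both endpoints $\hw_i$ and the comparison local minimum $\bar w$ are feasible, so we never have to reason about $\hL$ off the constraint manifold and never need to control the multiplier map $\lambda(w)$ at a perturbed iterate. Once this translation is made, strong-convexity within the tangent directions reduces the analysis to the unconstrained template without further modification.
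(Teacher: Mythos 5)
Your proposal is correct and matches the paper's own proof essentially step for step: rule out the first two strict-saddle cases via the two preceding lemmas, make the same WLOG identification of $\hw$ with the nearby local minimum, use $\hL = \hF$ on $\cW$ to rewrite condition 3 as a strong-convexity lower bound on $\hF$ (the paper writes this as the equality $\hF(\hw_i) - \hF(\hw) = \hL(\hw_i) - \hL(\hw)$), and then run the standard Lipschitz-plus-strong-convexity chain $\Delta_i^2 \le \rho^2\|\hw_i - \hw\|^2 \le \frac{2\rho^2}{\alpha}(\hF(\hw_i) - \hF(\hw)) \le \frac{2\rho^2}{\alpha n}\Delta_i$.
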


\section{Application to PCA} \label{sec:PCA}
Consider the following stochastic formulation of PCA. Let $\cD$ be a
distribution over $\cZ \subseteq \reals^d$. We are interested in minimizing the objective
\[
F(w) =\frac{1}{2} \bE_{z \sim \cD} [\|z- ww^\top z\|^2]
\]
over all possible unit vectors $w \in \reals^d$.  We assume for
simplicity that $\cZ$ is contained in the Euclidean unit ball. It is well known that
the minimum is the leading eigenvector of the positive definite
matrix $\bE[zz^\top]$. As we shall see, this problem becomes strict saddle once we make the following standard assumption:\\

\noindent \textbf{(A4)} There is a positive gap, denoted $G_{1,2}$, between the two leading eigenvalues of $\bE[xx^\top]$.\\

\noindent Given a sample $(z_1,\ldots, z_n) \sim \cD^n$, let us denote
by $A= \frac{1}{n} \sum_{i=1}^n z_i z_i^\top$. The empirical risk is given by
\begin{align*}
\bar{F}(w) &= \frac{1}{2n} \sum_{i=1}^n \|z_i- ww^\top z_i\|^2
\end{align*}
One can easily see that an equivalent objective is given by
\[
\hF(w) = -\frac{1}{2} w^\top A w ~.
\]
Hence, the empirical risk admits exactly two (local and global)
minima, namely $u$ and $-u$,
where $u$ is the leading eigenvector
of $A$. 

We now would like to show that for sufficiently large $n$, the empirical
risk is strict saddle. The first step should be to translate our eigengap
assumption on $\bE[zz^\top]$ to a similar assumption on $A$. The following lemma, which follows from a simple application of the Matrix Bernstein
inequality (\cite{tropp2015introduction}[Section 1.6.3]), shows that for sufficiently large $n$, the eigengap between the
two leading eigenvalues of $A$ is $\Omega(G_{1,2})$. 
\begin{lemma} \label{lem:concentrationGap}
Let $\delta \in (0,1)$. For $n = \Omega
\left(\frac{\log(d/\delta)}{G_{1,2}^2} \right)$, we have that with probability at
least $1-\delta$,
\[
\|A-\bE[xx^\top]\| \le G_{1,2}/2 = : G
\]
It follows that with probability at least $1-\delta$, the gap between the leading eigenvalues of $A$ is at
least $G$.
\end{lemma}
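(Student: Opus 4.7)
The plan is to decompose the deviation $A-\mathbb{E}[zz^\top]$ into a sum of centered i.i.d.\ matrices and apply the matrix Bernstein inequality, then convert the operator-norm bound into an eigengap bound via Weyl's inequality.

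First I would set $M:=\mathbb{E}[zz^\top]$ and write
\[
A-M \;=\; \frac{1}{n}\sum_{i=1}^n X_i, \qquad X_i := z_i z_i^\top - M.
\]
Each $X_i$ is symmetric, centered, and i.i.d. Using the assumption $\|z_i\|\le 1$, we have $\|z_i z_i^\top\|\le 1$, and by Jensen $\|M\|\le 1$, so $\|X_i\|\le 2$ almost surely. For the matrix variance, $(z_i z_i^\top)^2 = \|z_i\|^2 \, z_i z_i^\top \preceq z_i z_i^\top$, which after centering gives $\|\mathbb{E}[X_i^2]\| \le \|\mathbb{E}[(z_i z_i^\top)^2]\| \le 1$, and therefore the total variance statistic $\sigma^2 := \|\sum_i \mathbb{E}[(X_i/n)^2]\|$ is at most $1/n$.

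Next I would plug these into the matrix Bernstein inequality (as in \cite{tropp2015introduction}, Section 1.6.3), which yields, for any $t>0$,
\[
\Pr\!\left(\|A-M\| \ge t\right) \;\le\; 2d\,\exp\!\left(-\frac{n\,t^{2}/2}{1 + 2t/3}\right).
\]
Setting $t$ to be a suitable constant times $G_{1,2}$ (small enough that $2t \le G_{1,2}/2$) and demanding the exponent be at most $\log(2d/\delta)$ yields the sample size requirement $n = \Omega(\log(d/\delta)/G_{1,2}^2)$, establishing the first part of the lemma.

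Finally I would deduce the eigengap statement from Weyl's inequality: for every $i$, $|\lambda_i(A)-\lambda_i(M)|\le \|A-M\|$. In particular,
\[
\lambda_1(A)-\lambda_2(A) \;\ge\; \bigl(\lambda_1(M)-\lambda_2(M)\bigr) - 2\|A-M\| \;\ge\; G_{1,2} - 2\|A-M\|,
\]
so as long as the Bernstein bound ensures $\|A-M\|$ is at most one quarter of $G_{1,2}$, the gap in $A$ is at least $G_{1,2}/2 = G$. The only mildly delicate point is matching constants so that a single application of Bernstein simultaneously yields $\|A-M\|\le G$ and the ``at least $G$'' gap claim, but this is purely bookkeeping rather than a genuine obstacle; the core content is the matrix Bernstein estimate combined with Weyl's perturbation inequality.
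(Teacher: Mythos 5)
Your proof takes essentially the same route as the paper: decompose $A - \mathbb{E}[zz^\top]$ into a sum of centered, bounded, i.i.d.\ matrices, apply matrix Bernstein, and convert the operator-norm bound into an eigengap bound via eigenvalue perturbation. The only difference in detail is in the second step: the paper re-derives the perturbation bound $|\lambda_i(A)-\lambda_i(B)| \le \|A-B\|$ from the Courant--Fischer min--max characterization, whereas you invoke Weyl's inequality by name; these are the same fact. You also spell out the norm and variance bounds ($\|X_i\|\le 2$, $\sigma^2 \le 1/n$) that the paper leaves implicit when citing Tropp. One remark worth keeping: you correctly observe that perturbation only gives $\lambda_1(A)-\lambda_2(A) \ge G_{1,2} - 2\|A-M\|$, so the bound $\|A-M\| \le G_{1,2}/2$ stated in the lemma yields a gap $\ge 0$ rather than $\ge G$; to get gap $\ge G = G_{1,2}/2$ one needs $\|A-M\| \le G_{1,2}/4$. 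The paper's own proof has the same slack and does not address it --- this only changes constants inside the $\Omega(\cdot)$, but your bookkeeping is the more careful one.
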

The following theorem implies \thmref{thm:pcaIntro}.
\begin{theorem} \label{thm:pcaSaddle}
For any $\delta \in (0,1)$, if the sample size $n$ is $\Omega
\left(\frac{\log(d/\delta)}{G_{1,2}^2} \right)$, then with probability
at least $1-\delta$, the PCA objective
satisfies the conditions in \defref{def:geConst} with
$\tau,\gamma,\alpha \in \Omega(G_{1,2})$. Consequently, for any $n=\Omega
\left(\frac{\log(d/\delta)}{G_{1,2}^2} \right)$,
\[
\epsgen(n) = \epsstab(n) \le \frac{4 }{n \cdot G _{1,2} }~.
\]
\end{theorem}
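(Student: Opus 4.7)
The plan is to verify, on a high-probability event, that the empirical PCA objective $\hF(w)=-\tfrac{1}{2}w^\top A w$ satisfies the constrained strict-saddle definition (\defref{def:geConst}) with $\alpha\in\Omega(G_{1,2})$, and then to invoke \lemref{lem:nonSaddleConst} with the constants $\rho=\beta_1=1$ that follow from $\|z\|\le 1$. The opening move is to condition on the event of \lemref{lem:concentrationGap}: with probability at least $1-\delta$, the leading eigengap of $A$ is at least $G:=G_{1,2}/2$.

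I would then set up the Lagrangian using the constraint $c(w)=\tfrac{1}{2}(\|w\|^2-1)$, which yields the closed forms $\lambda(w)=w^\top A w$, $\nabla\hL(w)=-Aw+(w^\top A w)w$, and $\nabla^2\hL(w)=-A+(w^\top A w)I$. Diagonalizing in the eigenbasis of $A$, let $\sigma_1\ge\cdots\ge\sigma_d\ge 0$ with orthonormal eigenvectors $u_1,\ldots,u_d$; write a unit $w$ as $w=\sum_i\alpha_i u_i$, and set $p=\alpha_1^2$, $\xi^2=1-p$, $\bar\sigma=\sum_i\sigma_i\alpha_i^2$. By the $\pm u_1$ symmetry of $\hF$ I may assume $\alpha_1\ge 0$, making $u_1$ the candidate nearest local minimum. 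The body of the argument is a case analysis on $p$.

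For the saddle region $p\le 1/4$, I would plug $v\propto u_1-\alpha_1 w$ (the projection of $u_1$ onto $\cT(w)$) into the Hessian; a direct computation yields $v^\top\nabla^2\hL(w)v/\|v\|^2=-(1-2p)(\sigma_1-\bar\sigma)/(1-p)\le -G/2$, since $\sigma_1-\bar\sigma=\sum_{i\ge 2}\alpha_i^2(\sigma_1-\sigma_i)\ge G(1-p)$, verifying condition~(2) with $\gamma=G/2$. For the near-minimum region $\xi^2\le G/2$, the bounds $\|w-u_1\|^2\le 2\xi^2$ and $\hL(w)-\hL(u_1)=\tfrac{1}{2}(\sigma_1-\bar\sigma)\ge (G/2)\xi^2$ immediately give the quadratic lower half of condition~(3) with $\alpha\in\Omega(G)$. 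For the Polyak half I would expand $\|\nabla\hL(w)\|^2=\sum_i\alpha_i^2(\sigma_i-\bar\sigma)^2$ and use $\bar\sigma-\sigma_i\ge(\sigma_1-\sigma_i)/2$ for $i\ge 2$ (valid here since $\sigma_1-\bar\sigma\le\xi^2\le G/2\le(\sigma_1-\sigma_i)/2$); dividing by $\hL(w)-\hL(u_1)=\tfrac{1}{2}\sum_{i\ge 2}\alpha_i^2(\sigma_1-\sigma_i)$ then yields a weighted average of the gaps $\sigma_1-\sigma_i$ bounded below by $G/2$, so the same $\alpha\in\Omega(G)$ works on both sides. Finally, for the middle region $p\in(1/4,1-G/2)$, I would lower bound $\|\nabla\hL(w)\|^2=\mathrm{Var}_\alpha(\sigma)$ using only the top two eigenvalues to produce a strictly positive $\tau$ polynomial in $G_{1,2}$; the precondition $n>\rho/\tau$ is then already implied by the concentration-driven requirement $n=\Omega(\log(d/\delta)/G_{1,2}^2)$.

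Plugging the resulting parameters into \lemref{lem:nonSaddleConst} yields $\epsgen(n)\le 4/(nG_{1,2})$. The hard part is keeping a single $\alpha\in\Omega(G_{1,2})$ that simultaneously controls both inequalities of condition~(3): a naive bound on $\|\nabla\hL\|^2/(\hL-\hL^\star)$ degrades to $\Omega(G_{1,2}^2)$. The fix is to shrink the near-minimum neighborhood to $\xi^2\le G/2$, which keeps $\bar\sigma$ within $(\sigma_1-\sigma_i)/2$ of $\sigma_1$ and prevents the Polyak factor from collapsing. The middle region is much less delicate, since any positive $\tau$ will do once the concentration bound is in force.
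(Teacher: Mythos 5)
Your calculations in the near-minimum region ($\xi^2\le G/2$) and the saddle region ($p\le 1/4$) are correct and cleanly organized (the closed form $v^\top\nabla^2\hL(w)v/\|v\|^2=(2p-1)(\sigma_1-\bar\sigma)/(1-p)$ is a nice tightening of the paper's estimate in step four). The problem is the middle region. A trichotomy on $p=\alpha_1^2$ necessarily leaves a band $p\in(1/4,1-G/2)$ in which the gradient norm can be strictly smaller than $\Omega(G_{1,2})$: e.g.\ at $w=\sqrt{1-G/2}\,u_1+\sqrt{G/2}\,u_2$ one gets $\mathrm{Var}_\alpha(\sigma)=p(1-p)(\sigma_1-\sigma_2)^2\approx G^3/2$, so $\|\nabla\hL(w)\|\approx G^{3/2}/\sqrt{2}$. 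The lower bound $\mathrm{Var}_\alpha(\sigma)\ge p(1-p)G^2$ over your middle band yields only $\tau=\Omega(G_{1,2}^{3/2})$ (or $\Omega(G_{1,2}^2)$ with the looser variance estimate you gesture at), not the $\tau\in\Omega(G_{1,2})$ stated in the theorem. So the explicit claim about the strict-saddle parameters is not established. You are right, however, that this does not propagate to the stability bound: $\rho/\tau = O(G_{1,2}^{-3/2})$ is still dominated by the concentration requirement $n=\Omega(\log(d/\delta)/G_{1,2}^2)$, and $\epsgen\le 2\rho^2/(\alpha n)$ depends only on $\alpha$, so the ``Consequently'' clause does follow.

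The paper avoids this loss by splitting not on $p$ but on $|\lambda-\lambda_1|=\sigma_1-\bar\sigma$, through the dyadic shells $I_t=\{i:|\lambda-\lambda_i|\le 2^t\tau\}$ with $\tau=cG$, $c<1/32$, and the Markov-type tail bound $\sum_{i\notin I_t}\alpha_i^2\le 2^{-2t}$. Under the standing assumption $\|\nabla\hL(w)\|\le\tau$, this bound shows there is no middle case: if $1\in I_4$ (i.e.\ $\sigma_1-\bar\sigma\le 16cG<G/2$) then a short calculation forces $\alpha_1^2\ge 1-16c\ge 1/2$, landing in the strongly convex region, while if $1\notin I_4$ then $\alpha_1^2\le 2^{-8}$, landing in the strict-saddle region. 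In other words, conditioning on the small gradient collapses the $p$-trichotomy to a dichotomy, which is exactly what lets the paper take $\tau=\Omega(G_{1,2})$. If you want your write-up to prove the parameter claim as stated, swap your condition $\xi^2\le G/2$ for the condition $\sigma_1-\bar\sigma\le G/2$ (these coincide in your narrow band but the latter, together with $\|\nabla\hL(w)\|\le cG$, already implies $\alpha_1^2\ge 1/2$), and show that $\|\nabla\hL(w)\|\le cG$ together with $\sigma_1-\bar\sigma>G/2$ forces $\alpha_1^2\le 1/4$, so the only remaining case is the saddle one.
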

\begin{proof} \textbf{(idea)}
Critical points of the Lagrangian correspond to eigenvectors of
$A$ (where we refer to the zero vector as an eigenvector as well). We show that if the gradient at some point $w$ is small, then $w$
either belongs to a strongly convex region around the leading
eigenvector or to a strict saddle neighborhood of another eigenvector
(or $0$).
\end{proof}
The proof is given in \appref{app:pca}.

\section{Sample Complexity Bounds for Strict Saddle Expected Risks}
In some cases it may be easier to establish the strict saddle property
of the expected risk (\eqref{eq:risk}). We now assume that $F$ is
$(\alpha,\tau,\gamma)$-strict saddle. We consider the constrained setting
and denote the Lagrangian of $F$ by $L$. We add the following
boundedness assumption:\\

\noindent \textbf{(A4)} The set $\cW$ is contained in $\{w: \|w\| \le
B\}$.\\

The proof of \thmref{thm:mainTrue} is given in \appref{sec:omitted}. Below we give the main idea.
\begin{proof} \textbf{(idea) of \thmref{thm:mainTrue}}
We use Matrix Bernstein inequality together with covering to show that with high probability, points with large
gradient do not form minima of $\hF$. Similar argument shows that strict saddle points of $F$ do not become
minima of $\hL$. Then, we can restrict ourselves
to strongly convex regions of $F$ and show that any $w$ with
$F(w)-\min_{w' \in \cW} F(w') > \epsilon$ can not be a minimum of
$\hF$.
\end{proof}



\section{Application to ICA Through Tensor Decomposition} 
A $p$-order tensor is a $p$-dimensional array. Here we focus on
$4$-order tensors. For a tensor $T \in \reals^{d^4}$ and indices
$i_1,\ldots,i_4 \in [d]$, we denote the $(i_1,\ldots,i_4)$-th entry of
$T$ by $T_{i_1,\ldots,i_4}$. Every $d$-dimensional vector $a$ induces
a rank-one $4$-order tensor, denoted $a^{\otimes 4}$, where
$a^{\otimes 4}_{i_1,i_2,i_3,i_4}$ is $a_{i_1} a_{i_2} a_{i_3}
a_{i_4}$. We can present the tensor $T$ using a multilinear form. Given
vectors $u,v,z,w \in \reals^d$, we define
\[
T(u,v,z,w) = \sum_{i_1,i_2,i_3,i_4} T_{i_1,\ldots,i_4} u_{i_1} v_{i_2}
z_{i_3} w_{i_4}
\]
The tensor $T$ has an orthogonal decomposition if it can be written as
\begin{equation} \label{eq:tensorDec}
T = \sum_{i=1}^d a_i^\tens~.
\end{equation}
In case that such decomposition exists, it is unique up to a permutation of the $a_i$'s and sign flips. 
A central problem in machine learning is to compute the tensor
decomposition of a given tensor $T$
(\cite{anandkumar2014tensor}). While we have exponentially many
equivalent solutions, the average of two solutions does not form a solution. Hence, any
reasonable formulation of this problem must be non-convex. Luckily, as was shown in \cite{ge2015escaping}, there exists a strict saddle formulation of this
problem. 

For simplicity, we consider the problem of finding one component (one
can proceed and find all the components using deflation). Consider
the following objective:
\begin{equation} \label{eq:geTensor}
\max_{\|u\| =1}  T(u,u,u,u)~.
\end{equation}
\begin{lemma} \label{lem:geTensor} \textbf{(\cite{ge2015escaping})}
Suppose that $T$ admits a Tensor decomposition as in
(\ref{eq:tensorDec}). The only local minima of (\ref{eq:geTensor}) are
$\pm a_i$. Furthermore, the objective (\ref{eq:geTensor}) is $(\alpha, \gamma, \tau)$-strict
saddle with $\alpha=\Omega(1), \gamma = 7/d$ and $\tau=1/\poly(d)$. Last,
for $p=1,2,3$, the magnitude of the $p$-th order derivative of this objective is $O(\sqrt{d})$.
\end{lemma}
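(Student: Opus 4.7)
The plan is to exploit the orthogonal decomposition to reduce (\ref{eq:geTensor}) to a concrete coordinate problem on the unit sphere. Since $T = \sum_{i=1}^d a_i^{\otimes 4}$ with orthonormal $\{a_i\}$, writing $u = \sum_i c_i a_i$ yields $\|u\| = 1$ iff $\|c\| = 1$ and $T(u,u,u,u) = \sum_i c_i^4$. Hence (\ref{eq:geTensor}) is equivalent to minimizing $F(c) := -\sum_i c_i^4$ subject to $\|c\|^2 = 1$, a problem whose geometry can be analyzed by hand.

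Next I would compute the first- and second-order quantities of the Lagrangian $L(c,\lambda) = -\sum_i c_i^4 + \lambda(\|c\|^2 - 1)$. A direct calculation yields the Lagrange multiplier $\lambda(c) = 2\|c\|_4^4$, the tangent gradient $\nabla L(c) = -4\, c^{(3)} + 4\|c\|_4^4\, c$ (which automatically satisfies $c^\top \nabla L = 0$), and the restricted Hessian $\nabla^2 L(c) = -12\, \diag(c^{(2)}) + 4\|c\|_4^4\, I$, where $c^{(p)}$ denotes the coordinatewise $p$-th power. Setting $\nabla L = 0$ forces, for each $i$, either $c_i = 0$ or $c_i^2 = \|c\|_4^4$, so every critical point consists of $k$ nonzero coordinates of common magnitude $1/\sqrt{k}$, attains $F = -1/k$, and hence the only local minima are the signed standard basis vectors, which pull back to $\pm a_i$.

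To verify the three conditions of \defref{def:geConst}, I would partition the sphere according to $c^\star := \max_i c_i^2$. When $c^\star$ is close to $1$, the point lies near some $\pm e_{i^\star}$; on the tangent space the restricted Hessian is approximately $4 I$ on coordinates with $c_i \approx 0$, giving $\alpha$-strong convexity with $\alpha = \Omega(1)$. When $c$ is near a critical point with $k \ge 2$ (so $c^\star \le 1/2$), one chooses a tangent direction $v$ supported on two large coordinates with $c^\top v = 0$; then $v^\top \nabla^2 L\, v = -12 \sum_i c_i^2 v_i^2 + 4\|c\|_4^4$ is negative of order at most $-8/k \le -8/d$, yielding $\gamma = 7/d$ after absorbing small perturbations into the constant. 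In the intermediate regime I would exploit the explicit identity $\|\nabla L\|^2 = 16 \sum_i c_i^2 (c_i^2 - \|c\|_4^4)^2$ to show $\|\nabla L\| \ge 1/\poly(d)$, since the quantities $c_i^2 - \|c\|_4^4$ cannot simultaneously be small without forcing $c$ near one of the critical points already treated. Pinning down the regime boundaries so that the constants simultaneously produce $\alpha = \Omega(1)$, $\gamma = 7/d$, and $\tau = 1/\poly(d)$ is the main technical obstacle.

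Finally, the derivative magnitude bounds follow by direct computation in the basis $\{a_i\}$: the $p$-th Fr\'echet derivative of $T(u,u,u,u)$ has coordinates involving products of $(4-p)$ inner products $\langle a_i, u\rangle$, and the relevant tensor norm is controlled by $O(\sqrt{d})$ via orthonormality of $\{a_i\}$ together with $\|u\| \le 1$.
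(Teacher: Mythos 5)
The paper's own treatment of this lemma is essentially a citation: the no-spurious-minima claim, the parameters $\gamma$ and $\tau$, the strict-saddle analysis for $|I(w)|\ge 2$, and the derivative bound are all imported from \cite{ge2015escaping}, and the only thing the paper actually proves (\appref{app:tensor}) is that the third condition of its own \defref{def:geConst} --- which is \emph{two-sided}, requiring both $L(w)-L(w^\star) \ge \frac{\alpha}{2}\|w-w^\star\|^2$ \emph{and} $L(w)-L(w^\star) \le \frac{1}{2\alpha}\|\nabla L(w)\|^2$ --- holds near $\pm a_i$. Your proposal is a genuinely different route: a from-scratch derivation in the $c_i=\langle a_i,u\rangle$ coordinates. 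Your Lagrangian, gradient, and restricted-Hessian formulas are correct, the critical-point classification (sets of equal-magnitude coordinates, value $-1/k$) is correct, and the tangent direction $v = c_j e_i - c_i e_j$ on two large coordinates does yield a Rayleigh quotient of about $-8/k \le -8/d$, matching $\gamma = 7/d$ after slack. This is more self-contained than the paper's citation-plus-patch, at the cost of redoing work the paper chose to reuse.

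There are two genuine gaps. First, you flag it yourself: the heart of the proof is making the three regimes cover the sphere while simultaneously delivering $\alpha=\Omega(1)$, $\gamma=7/d$, and $\tau=1/\poly(d)$, and this is left entirely to ``the main technical obstacle.'' Second, and more to the point of this paper, you only establish the \emph{lower} half of condition 3 (strong convexity on the tangent space near $\pm e_{i^\star}$). The upper bound $L(w)-L(w^\star)\le\frac{1}{2\alpha}\|\nabla L(w)\|^2$ does not fall out of ``Hessian $\approx 4I$ on the tangent plane'' the way PL falls out of strong convexity in the unconstrained case, because $\nabla L$ is a projected gradient and the tangent plane rotates with $w$; the paper's \appref{app:tensor} proves it by a Taylor expansion at $e_1$ combined with Hessian Lipschitzness and the geometric inequality $(1-w_1)^2 \le \frac{1}{4}\|w-e_1\|^4$. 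You should supply an analogue. Finally, your justification of the $O(\sqrt{d})$ derivative bound via ``orthonormality together with $\|u\|\le 1$'' would actually give $O(1)$: for the deterministic objective $\sum_i c_i^4$ on the sphere, the first three derivatives are bounded by absolute constants. The $\sqrt{d}$ in the cited lemma is a property of the per-sample surrogate $\frac{1}{2}(Z-y^{\otimes 4})$, where $\|y\|=\sqrt{d}$, not of $T(u,u,u,u)$ itself.
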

Although our definition of strict saddle functions in the constrained
setting is slightly different from its counterpart in
\cite{ge2015escaping}, it is not hard to show that
\lemref{lem:geTensor} still holds (see \appref{app:tensor}).

In applications, we often have access to $T$ only through a
stochastic oracle. Following \cite{ge2015escaping}, we consider the
following formulation of ICA. Let $A$ be an orthonormal linear
transformation. Suppose that $x$ is uniform on $\{\pm 1\}^d$ and
denote by $y=Ax$. Our goal is to recover the matrix $A$ using the
observations $y$. It turns out that ICA reduces to tensor
decomposition. Namely, define $Z \in \reals^{d^4}$ by
\[
(\forall i \in [d])~~Z(i,i,i,i) = 3,~~(\forall i \neq j)~~Z(i,i,j,j)=
Z(i,j,j,i) = Z(i,j,i,j)=1~,
\]
where all other entries of $Z$ are zero. 
\begin{lemma}
The expectation $\frac{1}{2} \bE[Z-y^\tens]$ is equal to $T$, where the
vectors participating in the decomposition of $T$ correspond to
columns of $A$.
\end{lemma}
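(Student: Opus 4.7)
The plan is to verify the claimed identity entry by entry. Writing $y_j = \sum_k A_{jk} x_k$, the core computation is
\[
\bE[y_{i_1} y_{i_2} y_{i_3} y_{i_4}] = \sum_{k_1,k_2,k_3,k_4} A_{i_1 k_1} A_{i_2 k_2} A_{i_3 k_3} A_{i_4 k_4}\, \bE[x_{k_1} x_{k_2} x_{k_3} x_{k_4}],
\]
so the first step is to pin down the Rademacher fourth moment. Since the $x_k$ are i.i.d.\ $\pm 1$ with zero mean, this moment vanishes unless every index appears an even number of times; when the indices form two matched pairs (including the all-equal case) the moment equals $1$. A convenient bookkeeping expression is
\[
\bE[x_{k_1}x_{k_2}x_{k_3}x_{k_4}] = \delta_{k_1 k_2}\delta_{k_3 k_4} + \delta_{k_1 k_3}\delta_{k_2 k_4} + \delta_{k_1 k_4}\delta_{k_2 k_3} - 2\,\mathbf{1}[k_1=k_2=k_3=k_4],
\]
where the $-2$ correction compensates for the threefold overcounting of the diagonal by the pairing terms.

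Next I would substitute this into the sum and use orthonormality of $A$, namely $\sum_k A_{ik}A_{jk} = \delta_{ij}$. Each pairing term factorizes into a product of two such sums, for instance $\sum_{k,k'} A_{i_1 k}A_{i_2 k} A_{i_3 k'} A_{i_4 k'} = \delta_{i_1 i_2}\delta_{i_3 i_4}$, and analogously for the other two pairings. The diagonal correction contributes $-2\sum_k A_{i_1 k}A_{i_2 k}A_{i_3 k}A_{i_4 k} = -2\bigl(\sum_k a_k^{\tens}\bigr)_{i_1 i_2 i_3 i_4}$, where $a_k$ denotes the $k$-th column of $A$. Hence
\[
\bE[y^{\tens}]_{i_1 i_2 i_3 i_4} = \delta_{i_1 i_2}\delta_{i_3 i_4} + \delta_{i_1 i_3}\delta_{i_2 i_4} + \delta_{i_1 i_4}\delta_{i_2 i_3} - 2\Bigl(\sum_{k=1}^d a_k^{\tens}\Bigr)_{i_1 i_2 i_3 i_4}.
\]

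To finish, I would check by a short case analysis that $Z_{i_1 i_2 i_3 i_4}$ equals exactly $\delta_{i_1 i_2}\delta_{i_3 i_4} + \delta_{i_1 i_3}\delta_{i_2 i_4} + \delta_{i_1 i_4}\delta_{i_2 i_3}$: this gives $1+1+1=3$ on the diagonal, $1$ on each of the three two-pair patterns $(a,a,b,b),(a,b,a,b),(a,b,b,a)$ with $a\neq b$, and $0$ otherwise, matching the definition of $Z$. Then the Kronecker deltas in $Z$ and in $\bE[y^{\tens}]$ cancel in the difference, and $\tfrac{1}{2}(Z - \bE[y^{\tens}])$ reduces to $\sum_{k=1}^d a_k^{\tens}$, which is $T$.

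The argument is essentially bookkeeping, and I do not expect serious obstacles; the only real subtlety is the $-2$ correction in the Rademacher fourth moment, which is what prevents the three pairings from being triple-counted on the diagonal. Getting this correction wrong would yield $3\sum_k a_k^{\tens}$ in place of $\sum_k a_k^{\tens}$, and would explain why the specific correction tensor $Z$ has to take the particular diagonal value $3$ rather than $1$.
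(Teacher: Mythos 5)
Your proof is correct. The paper actually states this lemma without proof (it is imported from Ge et al.~2015), so there is no in-paper argument to compare against, but your computation is the natural one and is carried out accurately: the Rademacher fourth-moment identity with the $-2$ diagonal correction, the factorization of the three pairing terms via $\sum_k A_{ik}A_{jk}=\delta_{ij}$, the identification of the diagonal term with $\sum_k a_k^{\tens}$, and the case check that $Z_{i_1i_2i_3i_4}=\delta_{i_1i_2}\delta_{i_3i_4}+\delta_{i_1i_3}\delta_{i_2i_4}+\delta_{i_1i_4}\delta_{i_2i_3}$ are all right, and they combine to give $\tfrac12\bE[Z-y^{\tens}]=\sum_k a_k^{\tens}=T$ as claimed.
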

Following the lemma, we can rewrite (\ref{eq:geTensor}) as the
following expected risk:
\begin{equation} \label{eq:geTensor2}
\max_{\|u\| =1} \bE\left[  \frac{1}{2} \left(Z-y^\tens \right) \right]  (u,u,u,u)~.
\end{equation}
Furthermore, as was shown in \cite{ge2015escaping}, one can
efficiently compute a stochastic gradient and use SGD to optimize this
objective. Using \lemref{lem:geTensor} and \thmref{thm:mainTrue}, we
conclude that the sample complexity of extracting a single column of $A$ is $\tilde{O}
\left(\poly(d)+\frac{d^{3/2}}{\epsilon} \right)$. The sample
complexity of extracting all the columns is $\tilde{O}
\left(\poly(d)+\frac{d^{5/2}}{\epsilon} \right)$.

\section{Related Work} \label{sec:related}
\subsection{Efficient ERM for Strict Saddle Functions}
There is a growing interest in developing efficient algorithms for
minimization of strict saddle functions. We mention two central
approaches. Intuitively, one can escape from a saddle point by moving in the
direction of the eigenvector corresponding to the minimal
eigenvalue. This intuition has been made precise by Nesterov and Polyak
(\cite{nesterov2006cubic}). More surprisingly, in \cite{ge2015escaping} it was shown that a variant of SGD also converges to a local
minimum. Recent improvements in terms of runtime are given in
\cite{agarwal2016finding, levy2016power}. 
\subsection{Stability of SGD}
Recently, \cite{hardt2015train} analyzed the stability of the SGD algorithm both in a convex and
non-convex setting. As we mentioned above, in our setting, SGD forms
an empirical risk minimizer. Our bounds on the stability rate of SGD
in this setting improve over the (more general) bounds of
\cite{hardt2015train}. In particular, our bounds imply that SGD can be
trained for arbitrarily long time. 

\subsection{Generalization Bounds using SGD}
It is known that one can obtain generalization bounds directly using
SGD (\cite{shalev2014understanding}[Chapter 14]). Hence, the time
complexity bound of \cite{ge2015escaping} translates into identical
sample complexity bound. However, their bounds, which scale with
$1/\epsilon^4$, are inferior to our bounds when high accuracy is desired.
\subsection{Fast rates for PCA}
Generalization bounds for stochastic PCA have been studied in
\cite{bousquet2002stability, gonen2016subspace}. Both works prove an
upper bound of $1/\sqrt{n}$ on the generalization error in the general
case. The latter work (which also considers the challenge of partial
information) establishes a matching lower bound. The former work also
considers the case of a positive eigengap between the leading
eigenvalues of $\bE[xx^\top]$\footnote{More generally, these works
  consider the task of approximating the $k$ leading eigenvectors. It
  is not hard to extend our results to this task as well.} and
establishes fast rates similar to our bounds using Local Rademacher
complexities. We believe that these techniques are much more involved
than our techniques and lack any geometric interpretation.

\section*{Acknowledgments} 
We thank Kfir Levy for bringing \remref{rem:kfir} into our
attention. We also thank Nati Srebro for helpful discussions.

 \newpage
 \bibliography{bib}
 \newpage
 \appendix

\section{PCA Is Strict Saddle: Complete Proof}  \label{app:pca}
This section is devoted to the proof of \thmref{thm:pcaSaddle}. Let us
start with some basic calculations. The gradient and the Hessian of
$\hF(w)$ are given by   
$$
\nabla \hF(w) = -Aw,~~\nabla^2 \hF(w) = -A~.
$$
It is apparent that both the domain and the the objective are not
convex. The following lemma is immediate.
\begin{lemma}
The restriction of $\hF$ to the unit sphere in $\reals^d$ is $1$-Lipschitz and $1$-smooth.
\end{lemma}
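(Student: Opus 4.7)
The plan is to show both claims through elementary operator-norm estimates, after first establishing that the empirical correlation matrix $A$ has operator norm at most $1$. Since each $z_i$ lies in the Euclidean unit ball, $\|z_i z_i^\top\|_{\mathrm{op}} = \|z_i\|^2 \le 1$, and convexity of the operator norm (or just the triangle inequality) gives
\[
\|A\|_{\mathrm{op}} = \left\| \frac{1}{n}\sum_{i=1}^n z_i z_i^\top \right\|_{\mathrm{op}} \le \frac{1}{n} \sum_{i=1}^n \|z_i\|^2 \le 1.
\]
This is the only ``statistical'' ingredient; the rest is linear algebra.

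For the Lipschitz bound, I would avoid Riemannian machinery by using the polarization-style identity
\[
\hF(w) - \hF(w') = -\tfrac{1}{2}\bigl( w^\top A w - w'^\top A w' \bigr) = -\tfrac{1}{2} (w-w')^\top A (w + w'),
\]
valid for any $w, w' \in \reals^d$. Applying Cauchy--Schwarz and the operator-norm bound on $A$,
\[
|\hF(w) - \hF(w')| \le \tfrac{1}{2} \|A\|_{\mathrm{op}} \cdot \|w - w'\| \cdot \|w + w'\|.
\]
Restricting to $w, w'$ on the unit sphere, $\|w + w'\| \le \|w\| + \|w'\| = 2$, so the right-hand side is at most $\|w - w'\|$, giving the $1$-Lipschitz conclusion.

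For smoothness, I would use the fact that the Euclidean Hessian $\nabla^2 \hF(w) = -A$ is constant in $w$ and has spectral norm at most $1$. Hence on all of $\reals^d$ the gradient map $w \mapsto -Aw$ is $1$-Lipschitz:
\[
\|\nabla \hF(w) - \nabla \hF(w')\| = \|A(w - w')\| \le \|w - w'\|,
\]
and this in particular holds for points on the unit sphere. Since the subsequent arguments in the paper only need a bound on $\|A\|_{\mathrm{op}}$ to control the second-order behavior (strong convexity in the relevant tangential directions and the size of the Lagrangian Hessian), this global Euclidean bound suffices.

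The only potentially subtle point is that ``Lipschitz/smooth on the sphere'' could be interpreted intrinsically (w.r.t. geodesic distance and the Riemannian gradient/Hessian). However, the geodesic distance dominates the chord distance on the sphere up to a factor of $\pi/2$, and the projection onto the tangent space is non-expansive, so both intrinsic constants are bounded by the extrinsic ones derived above (up to absolute constants that are absorbed in the $O(\cdot)$ statements used later). I would note this briefly but lean on the extrinsic estimates, which are the ones actually invoked in the downstream application of Theorem~\ref{thm:mainUnconstrained}.
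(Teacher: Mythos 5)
Your proof is correct. The paper supplies no proof here---it simply asserts the lemma is ``immediate''---and your calculation (bounding $\|A\|_{\mathrm{op}} \le 1$ from $\|z_i\| \le 1$, then using the polarization identity $\hF(w)-\hF(w') = -\tfrac12(w-w')^\top A(w+w')$ for Lipschitzness and the constant Hessian $\nabla^2 \hF = -A$ for smoothness) is precisely the elementary verification implicit in that assertion. Your closing remark on intrinsic versus extrinsic constants on the sphere is a careful extra, but the ambient bounds you derive are indeed the ones the paper's downstream application actually uses.
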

Letting $c(w) = \frac{1}{2} (\|w\|^2-1)$, the Lagrangian is given by
\[
\hL(w,\lambda) = \hF(w) + \lambda c(w) = -\frac{1}{2} w^\top Aw + \frac{\lambda}{2} (\|w\|^2-1)~.
\]
It follows that 
\[
\lambda(w) = w^\top A w~.
\]
Therefore, the gradient and the Hessian of $\hL(w)$ are given by
\[
\nabla \hL(w) = (\lambda(w) I-A)w ,~~\nabla^2 \hL(w) = (\lambda(w)I-A)
\]
Note also that LICQ trivially holds at any point $w \in \cW$.
\begin{proof} \textbf{(of \thmref{thm:pcaSaddle})}
Let $w$ be a unit vector in $\reals^d$ and suppose that $\|\nabla f(w) \| \le \tau = c G$ for some constant $c \in (0,1/32)$. We show that $w$ satisfies either the second or the third condition in \defref{def:geConst}.\\

\noindent \textbf{First step (setup):} \\
Let $w = \sum_{i=1}^d \alpha_i u_i$ be the decomposition of $w$
according to the eigenbasis of $A$. Note that by the optimality of $u_1$, $\lambda \le \lambda_1$. Also, by assumption
\begin{align} \label{eq:gradSmallEigen}
\tau ^2 \ge \|(\lambda I -A)w\|^2 = w^\top \sum_{i=1}^d (\lambda-\lambda_i)^2 u_i u_i^\top w = \sum_{i=1}^d \alpha_i^2 (\lambda-\lambda_i)^2~.
\end{align}

\noindent \textbf{Second step (bounding the mass of distant eigenvalues):}\\
Note that $\|\alpha\|^2=1$, hence the vector $\alpha^2 = (\alpha_1^2,\ldots, \alpha_d^2)$ can be seen as a probability vector. We next apply Markov's inequality in order to bound the mass of eigenvalues located far from $\lambda$. For every $t =0,1,\ldots$, define
\[
I_t = \{i \in [d]: |\lambda-\lambda_i| \le 2^t \tau\}~.
\]
We claim that for every $t$,
\begin{align} \label{eq:markov}
\sum_{i \notin I_t } \alpha_i^2 \le 2^{-2t}~.
\end{align}
Indeed, for $t=0$ the bound is trivial and for $t \ge 1$ we apply (\ref{eq:gradSmallEigen}) to otbain
\begin{align*}
\tau^2 \ge \sum_{i \notin I_t} \alpha_i^2(\lambda-\lambda_i)^2 \ge 2^{2t} \tau^2 \sum_{i \notin I_t} \alpha_i^2~.
\end{align*}
By rearranging, we conclude the claim. \\

\noindent  \textbf{Third step (the strongly convex case):}\\
Consider the case where $1 \in I_4$. It follows that
\begin{align*}
\lambda_1-16 cG =\lambda_1 - 2^4 \tau \le \lambda = \sum_{i=1}^d \alpha_i^2 \lambda_i \le \alpha_1^2 \lambda_1+\sum_{i=2}^d \alpha_i^2(\lambda_1-G) = \lambda_1- G \sum_{i=2}^d \alpha_i^2~,
\end{align*}
where the last equality uses the fact that $\sum_{i=1} ^d \alpha_i^2
=1$. Hence, $\sum_{i=2}^d \alpha_i^2 \le 16 c$, so
\begin{equation} \label{eq:strongCase}
\alpha_1^2 \ge (1-16 c) \ge 1/2 \Rightarrow \sum_{i \ge 2} \alpha_i^2
\le 1/2~.
\end{equation}
We now show that $\hF(w)-\hF(u_1)\ge \frac{G}{4}\|w-u_1\|^2$. First we calculate the distance between $w$ and $u_1$:
\begin{equation} \label{eq:distW}
\|w-u_1\|^2 = (\alpha_1-1)^2 + \sum_{i \ge 2} \alpha_i^2 = \sum_{i=1}^d \alpha_i^2 + 1-2\alpha_1 = 2(1-\alpha_1)~.
\end{equation}
Since $w$ and $u_1$ are feasible, $\hF(w) = \hL(w)$ and $\hF(u_1)=\hL(u_1)$. Since $\hL$ is quadratic and $u_1$ is optimal (hence $\nabla \hL(u_1)=0$), we have 
\[
\hF(w) = \hF(u_1) + \inner{\nabla \hL(u_1),w-u_1} + \frac{1}{2}(w-u_1)^\top \nabla^2 \hL(u_1) (w-u_1) = \hL(u_1) +  \frac{1}{2}(w-u_1)^\top \nabla^2 \hL(u) (w-u_1)
\]
It is left to bound the quadratic term from below. Since $0 \le \lambda_1-\lambda \le 16 c G$ for $c \in (0,1/32)$,
\begin{equation} \label{eq:strongCase1}
\lambda_1 - \lambda \le G/2 \Rightarrow (\forall i \ge 2) ~~~\lambda-\lambda_i \ge G/2~.
\end{equation}
Therefore, 
\begin{align*}
\frac{1}{2}(w-u_1)^\top \nabla^2 \hL(u) (w-u_1) &= (\alpha_1-1)^2 (\lambda_1-\lambda_1) + \sum_{i\ge 2} \alpha_i^2(\lambda_1-\lambda_i)  \\
&\ge G \sum_{i \ge 2} \alpha_i^2  \ge G (-(\alpha_1-1)^2+\sum_{i \ge
  2} \alpha_i^2) \\
&=\frac{G}{2}(\sum_{i=1}^d \alpha_i^2 - 2 \alpha_1^2+2\alpha_1-1) =
  \frac{G}{2} (2 \alpha_1-2\alpha_1^2)  \\
& = \frac{G}{2}   2\alpha_1 ( 1-\alpha_1) \underbrace{=}_{(\ref{eq:distW})} \frac{G}{2}   \alpha_1 \|w-u_1\|^2
  \\
&\underbrace{\ge}_{(\ref{eq:strongCase})} \frac{G}{4} \|w-u_1\|^2~,
\end{align*}
We deduce that
\[
\hF(w) - \hF(u_1) \ge  \frac{G}{4} \|w-u_1\|^2~.
\]
On the other hand,
\begin{align*}
\frac{1}{2}(w-u_1)^\top \nabla^2 \hL(u) (w-u_1) &= \sum_{i \ge
                                                  2}\alpha_i^2
                                                  (\lambda-\lambda_i+\lambda_1-\lambda)
                                                  \underbrace{\le}_{\ref{eq:strongCase},
                                                  \ref{eq:strongCase1}}  \sum_{i \ge 2} \alpha_i^2 (\lambda-\lambda_i+\lambda_1-\lambda)\\&+\alpha_1^2(\lambda-\lambda_i)-\sum_{i\ge 2}\alpha_i^2(\lambda_1-\lambda)
= \sum_{i \ge 2} \alpha_i^2 (\lambda-\lambda_i) \\& \underbrace{\le}_{\ref{eq:strongCase1}} \sum_{i \ge 2}
    \alpha_i^2 (\lambda-\lambda_i)^2/(G/2)  \le \sum_{i \ge 1}
    \alpha_i^2 (\lambda-\lambda_i)^2/(G/2) \\&= \frac{\|\nabla L(w)\|^2}{2(G/4)}~.
\end{align*}

\noindent \textbf{Fourth step (the strict saddle case): }\\
Consider the case where $1 \notin  I_4$. We construct a vector $v \in
\cT(w)$ such that $\frac{v^\top \nabla^2 \hL(w) v}{\|v\|^2}$ is
proportional to $-G$. Let
\[
v = u_1-\alpha_1 w 
\]
Note that $v$ is perpendicular to $w$, hence $v \in \cT(w)$. Also note that
\[
v = (1-\alpha_1^2)u_1 - \alpha_1 \sum_{i \ge 2} \alpha_i u_i~
\]
Hence, 
\[
v^\top \nabla^2 \hL(w) v  = (1-\alpha_1^2) (\lambda-\lambda_1) + \sum_{i \ge 2} \alpha_i^2(\lambda-\lambda_i)~.
\]
We bound each of the terms in the RHS. Using (\ref{eq:markov}) we upper bound $\alpha_1^2$ by $2^{-8}$. Since $\lambda \le \lambda_1$, we have
\[
(1-\alpha_1^2) (\lambda-\lambda_1) \le - \frac{255}{256} \cdot 16 \tau \le -15 \tau~.
\]
On the other hand, denoting $J_t =  I_t \setminus \bigcup_{s=0}^{t-1} I_s$, we have
\begin{align*}
\sum_{j \ge 2} \alpha_j^2(\lambda-\lambda_j) &\le  \sum_{j \ge 1} \alpha_j^2 |\lambda-\lambda_j| = \sum_{t=0}^\infty \sum_{j \in J_t} \alpha_j^2 |\lambda-\lambda_i| \le  \sum_{t=0}^\infty \sum_{j \in J_t} \alpha_j^2 2^t \tau \\
& = \le  \tau \sum_{t=0}^\infty 2^{-2t} 2^t = 2 \tau~,
\end{align*}
where the last inequality follows from (\ref{eq:markov}). Note also that $\|v\| \le 2$. Overall, we obtain that
\[
\frac{v^\top \nabla^2 \hL(w) v}{\|v\|^2}  \le -13\tau /2  \le -6 c G~.
\]
\end{proof}

\section{ICA is Strict Saddle: Establishing Strong Convexity}  \label{app:tensor}
Our notion of strong convexity in \defref{def:geConst} is slightly
different from its counterpart in \cite{ge2015escaping}. We now show
that \lemref{lem:geTensor} holds using our definitions. 

Let $w \in \cW$. To simplify the presentation, we assume that $a_i =
e_i$ for all $i$ (alternatively, we could do a change of coordinates
to $w$, which does not affect the structure of the problem).
Denote
\[
\tau_0 = (10d)^{-4} ,~\tau = 4 \tau_0^2,~D = 2d
\tau_0,~I(w) = \{i \in [d]:\,|w_i| > \tau_0\}
\]
Suppose that $\|\nabla L(w)\| \le \tau$, where $L$ is the Lagrangian
associated with the expected risk $F$. It was shown in
\cite{ge2015escaping} that if $|I(w)| \ge 2$, then $w$ is a strict
saddle point. Hence, it is left to consider the case where $|I(w)|=1$. Assume w.l.o.g. that $I(w) = \{1\}$. 

\begin{lemma}
The suboptimality of $w$ w.t.t. the
minimum $e_1$ is bounded below by 
$$
F(w)-F(e_1) \ge \frac{1}{4} \|w-e_1\|^2~.
$$
\end{lemma}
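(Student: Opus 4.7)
The plan is to reduce the lemma to an elementary one-variable estimate in $s := \sum_{i \ge 2} w_i^2$. After the coordinate change that sends $a_i$ to $e_i$, the expected risk takes the explicit form $F(w) = -T(w,w,w,w) = -\sum_{i=1}^d w_i^4$, so on the unit sphere $F(e_1) = -1$ and
\[
F(w) - F(e_1) \;=\; 1 - \sum_{i=1}^d w_i^4.
\]
I first observe that I may assume $w_1 > 0$: the local minima come in sign-flipped pairs $\pm e_i$, and if $w_1 < 0$ the same argument applies with $e_1$ replaced by $-e_1$ (the case $w_1 = 0$ is excluded by $|w_1| > \tau_0$).

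Next I would exploit the two structural facts. The unit-norm constraint gives $w_1^2 = 1-s$, while $I(w) = \{1\}$ gives $|w_i| \le \tau_0$ for every $i \ge 2$, so that $\sum_{i \ge 2} w_i^4 \le \tau_0^2 \sum_{i \ge 2} w_i^2 = \tau_0^2 s$. Substituting yields
\[
F(w) - F(e_1) \;\ge\; 1 - (1-s)^2 - \tau_0^2 s \;=\; s\bigl(2 - s - \tau_0^2\bigr).
\]
Since $|w_1| > \tau_0$ forces $s \le 1 - \tau_0^2$, the parenthesized factor is at least $1$, so $F(w) - F(e_1) \ge s$.

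For the distance side, using $w_1 \ge 0$ together with $w_1^2 + s = 1$ I have
\[
\|w - e_1\|^2 \;=\; 2(1 - w_1) \;=\; \frac{2s}{1+w_1} \;\le\; 2s.
\]
Combining the two bounds gives $F(w)-F(e_1) \ge s \ge \tfrac{1}{2}\|w-e_1\|^2$, which is strictly stronger than the claimed $\tfrac{1}{4}\|w-e_1\|^2$. The only subtlety is the sign-of-$w_1$ reduction at the start; everything else is routine manipulation of the sphere constraint, and I do not expect any serious obstacle.
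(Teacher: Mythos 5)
Your proof is correct, and it takes a genuinely different route from the paper's. The paper proceeds by second-order Taylor expansion of $L$ around $e_1$ with a Lagrange remainder: it uses the explicit Hessian $\nabla^2 L(e_1) = \mathrm{diag}(-8,4,\ldots,4)$, absorbs the troublesome $-8$ eigendirection via the inequality $(1-w_1)^2 \le \tfrac{1}{4}\|w-e_1\|^4$ (which it derives from the $1$-smoothness of the constraint $c$), and then controls the third-order remainder with the $O(\sqrt d)$-Lipschitzness of the Hessian together with the fact that $\|w - e_1\|$ is tiny. You instead exploit the closed polynomial form $F(w) = -\sum_i w_i^4$ directly on the sphere, reducing both sides of the inequality to the single scalar $s = \sum_{i\ge 2} w_i^2$: the coordinate bound $|w_i| \le \tau_0$ for $i \ge 2$ gives $1 - \sum_i w_i^4 \ge s(2 - s - \tau_0^2) \ge s$, while the sphere constraint gives $\|w - e_1\|^2 = 2s/(1+w_1) \le 2s$ once $w_1 \ge 0$ is fixed. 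Your argument is shorter, avoids the Hessian-Lipschitz constant entirely, and yields the sharper constant $\tfrac12$ in place of $\tfrac14$. The sign reduction you flag is indeed the only point of care: the paper's own proof also implicitly assumes $w_1 > 0$ (otherwise its bound $\|w-e_1\|^2 \le 2d\tau_0^2$ fails), and since \defref{def:geConst} only asks for the existence of \emph{some} local minimum $w^\star$, replacing $e_1$ by $-e_1$ when $w_1 < 0$ is legitimate. The trade-off is that the paper's Taylor-plus-Lipschitz scaffolding is the template one would reuse when the objective is only known through its derivative bounds rather than in closed form, whereas your computation is tied to the exact quartic $-\sum_i w_i^4$ — perfectly appropriate here, since this lemma concerns the population risk $F$, not the empirical one.
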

\begin{proof}
Since $w$ is a unit vector,
\[
1 \ge w_1^2 = 1- \sum_{i \ge 2} w_i^2   \ge 1-d\tau_0^2
\]
The squared distance between $w$ and the local minimum $e_1$ is at most
\[
\|w-e_1\|^2 = (1-w_1)^2 + \sum_{i \ge 2} w_i^2  \le 2d \tau_0^2
\le D^2~.
\]
Let $c(w) = \frac{1}{2}(\|w\|^2-1)$. Since $c(w)=c(e_1)=0$, using the
$1$-smoothness of $c$ we obtain
\[
0=c(w) \le c(e_1) + \nabla c(e_1)^\top (w-e_1) + \frac{1}{2} \|w-e_1\|^2 = e_1(w-e_1)~.
\]
Hence, 
\begin{equation} \label{eq:tangentClose}
(1-w_1) ^2 =(e_1^\top (e_1-w))^2 \le \frac{1}{4} \|w-e_1\|^4 \le \frac{1}{4} \|w-e_1\|^2
\end{equation}
As \cite{ge2015escaping} show, The Hessian of $L$ at $e_1$ is a diagonal matrix with $4$ on the
diagonals except for the first diagonal entry whose value is
$-8$. Since $F(w) = L(w)$ and $F(e_1) = L(e_1)$,
\begin{align*}
F(w) =  F(w_1)+\underbrace{\nabla L(e_1)^\top}_{=0} (w-e_1) + \frac{1}{2}(w-e_1)^\top \nabla ^2 L(w') (w-e_1)
\end{align*}
for some $w'$ that lies on the line between $w$ and $e_1$. Note that
\begin{align*}
&\frac{1}{2}(w-e_1)^\top \nabla ^2 L(w') (w-e_1) \\ &=
\frac{1}{2}(w-e_1)^\top \nabla ^2 L(e_1) (w-e_1) +
\frac{1}{2}(w-e_1)^\top (\nabla ^2 L(w')-\nabla^2 L(e_1)) (w-e_1)~.
\end{align*}
Using (\ref{eq:tangentClose}), we bound the first term in the RHS by
\begin{align*}
(w-e_1)^\top \nabla ^2 L(e_1) (w-e_1) &= -8(1-w_1)^2 + 4\sum_{i \ge 2} w_i^2  = 4((1-w_1)^2 + \sum_{i \ge 2} w_i^2) - 12 (1-w_1)^2 \\&\ge 4 \|w-u_1\|^2  - 3\|w-u_1\|^2 = \|w-u_1\|^2
\end{align*}
Using the $O(\sqrt{d})$-Lipschitzness of the Hessian and the fact that
$\|w'-e_1\|\le D$, the second term is bounded by
\[
(w-e_1)^\top (\nabla ^2 L(w')-\nabla^2 L(e_1)) (w-e_1) \le
\|w-e_1\|^2 \|w'-e_1\| \sqrt{d} \le \frac{1}{2} \|w-e_1\|^2~.
\]
All in all, 
\[
F(w) - F(e_1) \ge \frac{1}{4} \|w-e_1\|^2~.
\]
\end{proof}
\begin{lemma}
The suboptimality of $w$ w.t.t. the
minimum $e_1$ is bounded above by 
$$
F(w)-F(e_1) \le O( \|\nabla L(w)\|^2)~.
$$
\end{lemma}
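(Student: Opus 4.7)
The plan is to mimic the classical derivation of the quadratic error bound $F(w) - F(w^\star) \le \|\nabla F(w)\|^2/(2\alpha)$ from $\alpha$-strong convexity, adapted to the constrained setting via the Lagrangian. The key observation is that for any fixed multiplier $\lambda^\star$, the function $L(\cdot, \lambda^\star)$ agrees with $F$ on the feasible set $\{u : c(u)=0\}$, so we are free to pick a convenient $\lambda^\star$ and then apply Taylor's theorem in an unconstrained fashion.

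Concretely, choose $\lambda^\star = \lambda(w)$, which ensures $\nabla_u L(u,\lambda^\star)\big|_{u=w} = \nabla L(w)$. Since $c(w) = c(e_1) = 0$, we have $F(w) - F(e_1) = L(w,\lambda^\star) - L(e_1,\lambda^\star)$, so by Taylor's theorem applied to $u \mapsto L(u,\lambda^\star)$ around $w$, for some $w'$ on the segment $[w,e_1]$,
$$
F(w) - F(e_1) = \nabla L(w)^\top (w - e_1) - \tfrac{1}{2}(e_1-w)^\top H(w')(e_1-w),
$$
where $H(w') := \nabla^2 F(w') + \lambda^\star I$. The next step is to show that $(e_1-w)^\top H(w')(e_1-w) \ge \tfrac{1}{2}\|w-e_1\|^2$, which reduces the bound above to $F(w) - F(e_1) \le \nabla L(w)^\top(w - e_1) - \tfrac{1}{4}\|w-e_1\|^2$. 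Applying Young's inequality $\nabla L(w)^\top(w-e_1) \le \|\nabla L(w)\|^2 + \tfrac{1}{4}\|w-e_1\|^2$ then yields the desired $F(w) - F(e_1) \le \|\nabla L(w)\|^2 = O(\|\nabla L(w)\|^2)$.

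The Hessian lower bound is exactly the calculation already performed in the previous lemma, up to a perturbation. We have $\nabla^2 L(e_1) = \mathrm{diag}(-8,4,\ldots,4)$, and the argument there shows $(e_1-w)^\top \nabla^2 L(e_1)(e_1-w) \ge \|w-e_1\|^2$. Since $\|w' - e_1\| \le D = 2d\tau_0$, the $O(\sqrt{d})$-Lipschitzness of $\nabla^2 F$ gives $\|\nabla^2 F(w') - \nabla^2 F(e_1)\|_{\mathrm{op}} = O(d^{-5/2})$; likewise $|\lambda^\star - \lambda(e_1)| = O(\sqrt{d})\|w-e_1\| = O(d^{-5/2})$ using the explicit formula $\lambda(u) = -u^\top\nabla F(u)$ and boundedness of $\nabla F, \nabla^2 F$. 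These corrections are easily absorbed into an additive $\tfrac{1}{2}\|w-e_1\|^2$ slack.

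The main obstacle is therefore exactly the one already handled in the previous lemma: $\nabla^2 L(e_1)$ has a negative eigenvalue $-8$ in the direction $e_1$, so positivity of the quadratic form $(e_1-w)^\top H(w')(e_1-w)$ is not automatic. It works out only because the feasibility of $w$ on the unit sphere forces the normal component of $w-e_1$ to be quadratically small, $(1-w_1)^2 \le \tfrac{1}{4}\|w-e_1\|^4$, which is enough to dominate the negative contribution. A minor technical point to verify is that the Taylor expansion above uses $H(w') = \nabla^2 F(w') + \lambda(w) I$, not the symbol $\nabla^2 L(w') = \nabla^2 F(w') + \lambda(w') I$ used in the paper's notation; the $O(d^{-5/2})$ discrepancy is well within the slack budget.
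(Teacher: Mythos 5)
Your proof is correct and takes essentially the same route as the paper: both fix the multiplier, Taylor expand around $w$ so that the linear term is $\nabla L(w)^\top(w-e_1)$, lower-bound the intermediate Hessian quadratic form by $\tfrac{1}{2}\|w-e_1\|^2$ via the previous lemma's calculation, and then complete the square (your Young's inequality is the same step as the paper's minimization over $z$). Your extra remarks about the $\lambda(w)$ vs.\ $\lambda(w')$ discrepancy being $O(d^{-5/2})$ and absorbed into the slack are exactly the bookkeeping the paper elides with ``one can easily show.''
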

\begin{proof}
Using the previous lemma and the Lipschitzness of the Hessian, one can easily show that 
\[
F(e_1) \ge F(w) + \nabla L(w)^\top (e_1-w) + \frac{c}{2} \|e_1-w\|^2
\]
for some constant $c \in (0,1)$. The RHS is at most
\[
\min_{z \in \reals^d} F(w) + \nabla L(w)^\top (z-w) + \frac{c}{2} \|z-w\|^2
\]
The minimum is attained at $z=w-c^{-1}\nabla L(w)$. The desired
inequality follows by substitution.
\end{proof}

\section{Omitted Proofs} \label{sec:omitted}
\begin{proof} \textbf{(of \lemref{lem:nonSaddleConst})} 
According to the previous two lemmas, $\hw_i$ lies in neighborhood
around a local minimum $w^\star$ such that the restriction of $\hF$ to this
neighborhood is strongly convex. As in the unconstrained setting we may assume w.l.o.g. that $\hw=w^\star$. 

Fix some $i \in [n]$. By assumption
\[
\hF(\hw_i) - \hF(\hw) = \hL(\hw_i) - \hL(\hw)  \ge \frac{\alpha}{2} \|\hw_i-\hw\|^2
\]
On the other hand, since $\hw_i$ minimizes the loss $w \in \cW \mapsto
\frac{1}{n} \sum_{j \neq i}  f_j(w)$, the suboptimality of $\hw_i$
w.r.t. the objective $\hF$ is
controlled by its suboptimality w.r.t. $f_i$, i.e.
\[
\hF(\hw_i) - \hF(\hw) \le \frac{1}{n}\Delta_i  
\]
Using Lipschitzness of $f_i$, we have
\[
\Delta_i \le \rho \|\hw_i-\hw\|
\]
Combining the above, we obtain
\[
\Delta_i^2 \le \rho^2 \|\hw_i-\hw\|^2 \le \frac{2\rho^2}{\alpha}
(\hF(\hw_i) - \hF(\hw)) \le \frac{2\rho^2}{\alpha n} \Delta_i 
\]
Dividing by $\Delta_i$ (we can assume w.l.o.g. that $\Delta_i > 0$) we
conclude the proof.
\end{proof}

\begin{proof} \textbf{(of \lemref{lem:concentrationGap})}
The first part is a direct application of Bernstein
inequality (\cite{tropp2015introduction}[Section 1.6.3]). It is left
to prove that if $A,B$ are positive semidefinite and $\|A-B\| \le \epsilon$,
then for all $i$, $|\lambda_i(A)-\lambda_i(B)| \le \epsilon$. Indeed, 
\begin{align*}
\lambda_i(B) &=  \max_{\dim(V)=i} \min_{v \in V}
\frac{v^\top B v} {v^\top v}    \\
&  =  \max_{\dim(V)=i} \min_{v \in V}
\frac{v^\top A v+ v^\top (B-A)v } {v^\top v} \\
& \le  \max_{\dim(V)=i} \min_{v \in V}
\frac{v^\top A v} {v^\top v} + \max_{v \in V}
\frac{v^\top (B-A) v} {v^\top v}  \\
& = \lambda_i(A)+\epsilon~.
\end{align*}
Analogous proof shows that $\lambda_i(A) \le \lambda_i(B)+\epsilon$.
\end{proof}

\begin{proof} \textbf{(of \thmref{thm:mainTrue})}
Recall that the Lagrangian of $\hF$ is denoted by $\hL$. We first show
that with high probability, points with large
gradient do not form minima of $\hF$. Similar argument shows that strict saddle points of $L$ do not become
minima of $\hF$. Then, we can restrict ourselves
to strongly convex regions of $L$ and show that any $w$ with
$F(w)-\min_{w' \in \cW} F(w') > \epsilon$ can not be a minimum of
$\hF$.

Fix some point $w \in \cW$ with $\|\nabla L(w)\| \ge \tau$. Using
matrix Bernstein inequality, we deduce that if $n =\Omega(\rho
\log(d/\delta)/\tau^2))$, then $\|\nabla \hL(w)\| \ge \tau/2$. 
Also, using Property A2, we have that for any $u \in \cW$ with
$\|u-w\| \le r_1:= \min\{\frac{ \tau}{4\beta_1},1\}$, $\|\nabla
\hL(u)\| \ge \tau/4$. Since $\cW$ is bounded we can cover $\cW$ using $(4B/r_1)^d$ balls of radius $r_1$ (for example, see
the proof of \cite{matouvsek2002lectures}[Lemma 13.11.1]). By applying
the union bound we deduce that if $n = \Omega(d\rho
\log(dB/(r_1\delta)/\tau^2)$, then with probability at least $1-\delta$,
all points $w$ with $\|\nabla L(w)\|
\ge \tau$ satisfy $\|\nabla \hat{L}(w)\| \ge \tau/4$. 

We next fix some point $w \in \cW$ for which there exists a unit
vector $v \in \cT(w)$ with $v^\top (\nabla^2
L(w)) v \le -\gamma$. Using
matrix Bernstein inequality, we deduce that if $n =\Omega(\beta_1
\log(d/\delta)/\gamma^2)$, then $v^\top \nabla^2
L(w) v \le -\gamma/2$. 
Also, using Property A3, we have that for any $u \in \cW$ with
$\|u-w\| \le r_2:= \min\{\frac{ \gamma}{4\beta_2},1\}$, there exists
$v \in \cT(u)$ with $v^\top \nabla^2 L(u) v$. Since $\cW$ is
bounded, we can cover $\cW$ using $(4B/r_2)^d$ balls of radius $r_2$. By applying
the union bound, we obtain that a sample of size $n = \Omega(d \beta_1
\log(dB/(r_2\delta))/\gamma^2)$ ensures that with probability at least
$1-\delta$, $\gamma$-strict saddle points of F are $\gamma/2$-strict
saddle of $\hF$.

In particular, using \thmref{thm:kkt} and \thmref{thm:secConst} we
deduce that strict saddle points of $F$ and points with large gradient
do not form local minima of $\hat{L}$.

Consider now vectors $w \in \cW$ which belong to a strongly convex
region around some minimum of $F$, denoted $w^\star$. Suppose that
$F(w)-F(w^\star) > \epsilon$. By strong convexity, $\|\nabla L(w)\|^2 \ge 2 \alpha
\epsilon$. Using concentration and covering as above, we conclude that for
$n = \Omega(\beta_1 \log(dB/(r_1\delta))/(\alpha \epsilon))$, then with
probability at least $1-\delta$, $\|\nabla
\hL(w)\|^2 \ge \alpha \epsilon$, hence $w$ is not a local minimum
of $\hF$.
\end{proof}

\end{document}